\newcolumntype{Y}{>{\centering\arraybackslash}X}
\newcommand{\thickhline}{%
    \noalign {\ifnum 0=`}\fi \hrule height 1pt
    \futurelet \reserved@a \@xhline
}
\newcolumntype{"}{@{\hskip\tabcolsep\vrule width 2pt\hskip\tabcolsep}}
\newcounter{SystemCounter}
\newtheorem{system}[SystemCounter]{System}
\algrenewcommand\algorithmicindent{0.65em}%
\renewcommand{\vec}[1]{\boldsymbol{\mathbf{#1}}}
\newcommand{\mat}[1]{\boldsymbol{\mathbf{#1}}}
\newcommand{\ceil}[1]{\lceil#1\rceil}
\newcommand{\scaleMathLine}[2][1]{\resizebox{#1\linewidth}{!}{$\displaystyle{#2}$}}
\newcommand{\prl}[1]{\mathopen{}\left(#1\right)\mathclose{}}
\newcommand{\crl}[1]{\mathopen{}\left\{#1\right\}\mathclose{}}
\begin{document}

\begin{frontmatter}
%\runtitle{Insert a suggested running title}  % Running title for regular 
                                              % papers but only if the title  
                                              % is over 5 words. Running title 
                                              % is not shown in output.

\title{Large Scale Model Predictive Control with Neural Networks and Primal Active Sets\thanksref{footnoteinfo}} % Title, preferably not more 
                                                % than 10 words.

\thanks[footnoteinfo]{This paper was not presented at any IFAC 
meeting. Corresponding author S.~W.~Chen. Tel. +XXXIX-VI-mmmxxi. 
Fax +XXXIX-VI-mmmxxv.}

\author[University of Pennsylvania]{Steven W. Chen}\ead{chenste@seas.upenn.edu},    % Add the
\author[University of California San Diego]{Tianyu Wang}\ead{tiw161@eng.ucsd.edu},   
\author[University of California San Diego]{Nikolay Atanasov}\ead{natanasov@eng.ucsd.edu},               % e-mail address 
\author[University of Pennsylvania]{Vijay Kumar}\ead{kumar@seas.upenn.edu},  % (ead) as shown
\author[University of Pennsylvania]{Manfred Morari}\ead{morari@seas.upenn.edu}  % (ead) as shown

\address[University of Pennsylvania]{GRASP Laboratory, University of Pennsylvania, Philadelphia, PA} 
\address[University of California San Diego]{Department of Electrical and Computer Engineering, University of California San Diego, La Jolla, CA}  % Please supply                                              
          % full addresses        % here.

\begin{keyword}                           % Five to ten keywords,  
Fully Connected Neural Network; Primal Active Set Method; Model Predictive Control; Receding Horizon Control.               % chosen from the IFAC 
\end{keyword}                             % keyword list or with the 
                                          % help of the Automatica 
                                          % keyword wizard

\begin{abstract}                          % Abstract of not more than 200 words.
This work presents an explicit-implicit procedure to compute a model predictive control (MPC) law with guarantees on recursive feasibility and asymptotic stability. The approach combines an offline-trained fully-connected neural network with an online primal active set solver. The neural network provides a control input initialization while the primal active set method ensures recursive feasibility and asymptotic stability. The neural network is trained with a primal-dual loss function, aiming to generate control sequences that are primal feasible and meet a desired level of suboptimality. Since the neural network alone does not guarantee constraint satisfaction, its output is used to warm start the primal active set method online. We demonstrate that this approach scales to large problems with thousands of optimization variables, which are challenging for current approaches. Our method achieves a $2\times$ reduction in online inference time compared to the best method in a benchmark suite of different solver and initialization strategies.
% systems, whose states, inputs, and planning steps result in 
%These results are enabled by a data set generation algorithm that is critical for obtaining feasible training examples for high dimensional systems efficiently.
%This work presents an explicit-implicit procedure that combines an offline trained neural network with an online primal active set solver to compute a model predictive control (MPC) law with guarantees on recursive feasibility and asymptotic stability. The neural network improves the suboptimality of the controller performance and accelerates online inference speed for large systems, while the primal active set method provides corrective steps to ensure feasibility and stability. We introduce a primal-dual loss function to train a neural network to initialize the online controller. We then demonstrate online computation of the primal feasibility and suboptimality criteria to provide the desired guarantees. Next, we use these neural network and criteria measures to accelerate an online primal active set method through warm starts and early termination. Finally, we present a data set generation algorithm that is critical for efficiently generating a training set in high dimensional systems. The primary motivation is developing an algorithm that scales to systems that are challenging for current approaches, involving state and input dimensions as well as planning horizons in the order of tens to hundreds.
%, and we demonstrate high performance of our approach on a system with $36$ states, $9$ inputs, and a time horizon of $50$.
\end{abstract}

\end{frontmatter}

\section{Introduction}

Model predictive control (MPC) is a dynamic optimization technique widely used in industrial process applications, such as oil refineries and chemical plants \cite{qin2003survey}. Recently, MPC has found mainstream use in robotics for controlling ground \cite{richter2018bayesian}, aerial \cite{Watterson2015SafeRH,bouffard2012learning}, humanoid \cite{erez2013integrated}, and other autonomous robots due to its versatility, robustness, and safety guarantees. The transition from the process industry to robotics brings additional challenges since the available computation time is reduced from hours to milliseconds.

MPC techniques can be categorized into \emph{implicit} and \emph{explicit}. Implicit MPC focuses on online computation of an open-loop control sequence, optimizing the system performance at its current state. For example, Wang and Boyd~\cite{wang2010fast} exploit the structure of the quadratic program (QP) associated with the MPC problem to design efficient interior point methods.  Vichik and Borelli~\cite{vichik2014solving} experiment with a different computation scheme and demonstrate that analog circuits can solve QPs in microseconds through the use of custom hardware. Rather than solving an optimization problem online, explicit methods manage the computational load by pre-computing a control law $\vec{u} = \vec{\mu}(\vec{x})$ offline, as a function of all feasible states $\vec{x}$, where $\vec{\mu}$ is known to be piecewise affine on polytopes determined by the system constraints \cite[Ch. 11]{borrelli2017predictive}. Computing an optimal explicit control law can become computationally intractable in large problems because the number of polytopic regions may grow exponentially (worst case) with the number of constraints. Even determining which region contains the current system state online can require nontrivial processing power or memory storage~\cite{kvasnica2012clipping}. One approach to address the computational challenges of explicit MPC control is to construct an approximate sub-optimal controller. Jones and Morari~\cite{jones2010polytopic} use a double-description method to build piecewise-affine approximations of the value function and use barycentric functions on the polytopic regions to obtain an associated control law. The authors prove recursive feasibility and asymptotic stability of the suboptimal controller. Despite various approximation techniques, explicit MPC has not been demonstrated to scale to the same problem sizes as those handled by implicit MPC methods.

This paper investigates the use of a Rectified Linear Unit (ReLU) fully connected neural network to approximate the piecewise affine explicit MPC control law. There have been several recent works using neural networks for MPC design. Chen et al. \cite{Chen2018ApproximatingEM} use a neural network with an orthogonal projection operation to approximate the optimal control law. Hertneck et al. \cite{hertneck2018learning} use a neural network in a robust MPC framework to provide statistical guarantees of feasibility and stability. Zhang et al. \cite{zhang2019safe} use a neural network to approximate the primal and dual variables, and provide statistical guarantees as well as certificates of suboptimality. Our work extends these approaches by providing deterministic guarantees on recursive feasibility and asymptotic stability through corrective steps generated by an online QP solver. Moreover, we demonstrate for the first time that neural-network-based approximations of MPC control laws scale to large problems with thousands of optimization variables.

A few closely related works combine the strengths of machine learning for scalability with the analytical tractability of QP problems in an explicit-implicit MPC approach. Zeilinger et al. \cite{zeilinger2011real} compute a piecewise affine control law approximation, combine it with an active set method, and provide criteria to terminate the active set method early while still obtaining guarantees on recursive feasibility and asymptotic stability. Klau\v{c}o et al. \cite{klauvco2019machine} use classification trees and nearest neighbors to warm start an active set method and solve to the optimal solution. These approaches differ from our method because they do not utilize a neural network as the control policy approximator. In addition, they do not demonstrate nor address the challenges in scaling to large systems, specifically related to efficient generation of train data sets in high dimensions.

Generating large data sets of initial feasible states, optimal inputs, primal, and dual variables for high-dimensional systems to approximate an MPC controller has not been studied widely in the literature. Yet, a large data set is critical for accurate learning-based control law approximations, especially for high-dimensional systems. A na{\"i}ve rejection sampling approach to generate feasible initial states has rapidly diminishing probability of success as the number of constraints and dimensions grow. Existing works use gridding~\cite{klauvco2019machine,summers2011multiresolution} or random sampling~\cite{zhang2019safe, hertneck2018learning} methods that do not scale to high dimensions, or rely on reinforcement and imitation learning techniques that perform closed-loop simulations leading to weaknesses discussed in~\cite{ross2011reduction}. We propose an algorithm to efficiently generate large data sets in high dimensions using geometric random walks.

In summary, the goal of this work is to develop a neural-nework approximation of an MPC control law that scales to large systems, performs faster than implicit methods online, and provides guarantees on recursive feasibility and asymptotic stability. Our contributions include:
\begin{itemize}
	\item a primal-dual loss function that incorporates state and input constraints to train a deep ReLU network approximation of an explicit MPC control law (Sec.~\ref{sec:neural_network}); 
	\item an online approach that corrects the network output using a primal active set solver to guarantee recursive feasibility and asymptotic stability (Sec.~\ref{sec:guarantees_certificates});
	% and terminates early once sufficient criteria for recursive feasibility and asymptotic stability have been reached; \
	\item a geometric random walk algorithm for generating feasible sample data sets efficiently, necessary for training control law approximations for large systems (Sec.~\ref{sec:space_filling});
	%basutilizing ideas from geometric random walks that efficiently generates feasible samples for large systesms; and 
	\item a $2\times$ reduction in online inference time against the best benchmark method on a system with thousands of optimization variables. (Sec.~\ref{sec:results}).
\end{itemize}

\section{Preliminaries}
\label{sec:preliminaries}

Consider a discrete-time linear time-invariant system,
\begin{equation}
\label{eq:system}
\vec{x}(t+1) = \mat{A}\vec{x}(t) + \mat{B}\vec{u}(t),
\end{equation}
subject to state and input constraints,
\begin{equation}
\begin{aligned}
\label{eq:constraints}
\vec{x}(t) \in \mathcal{X} &:= \{\vec{x}\in \mathbb{R}^{n}|\mat{A}_{x}\vec{x} \leq \vec{b}_{x} \} \quad &\forall~t& \geq 0,\\
\vec{u}(t) \in \mathcal{U} &:= \{\vec{u}\in \mathbb{R}^{m}|\mat{A}_{u}\vec{u} \leq \vec{b}_{u} \} \quad &\forall~t& \geq 0.
\end{aligned}
\end{equation}
Assuming that the pair $(\mat{A},\mat{B})$ is stabilizable \cite[Ch. 7]{borrelli2017predictive}, we are interested in obtaining a receding horizon controller (RHC) \cite[Ch. 12]{borrelli2017predictive}. At each time $t$, we solve a constrained finite-time optimal control problem \cite[Ch. 7]{borrelli2017predictive},
\begin{align}
& \min_{\vec{u}_{0:N-1}}  
& & J(\vec{u}_{0:N-1} | \vec{x}(t)) \stackunder[5pt]{{}= \vec{x}_{N}^{\top}\mat{P}\vec{x}_{N} {}}{ {}+ \sum_{k=0}^{N-1} \prl{\vec{x}_{k}^\top\mat{Q}\vec{x}_{k} +\vec{u}_{k}^\top\mat{R}\vec{u}_{k}}}\notag\\
& \quad\text{s.t.}
& & \vec{x}_{k+1} = \mat{A}\vec{x}_k +\mat{B}\vec{u}_k, \;\vec{x}_{0} = \vec{x}(t),\notag\\
& & & \mat{A}_{x}\vec{x}_k \leq \vec{b}_{x}, \mat{A}_{u}\vec{u}_k \leq \vec{b}_{u}, \mat{A}_{f}\vec{x}_{N} \leq \vec{b}_{f},\label{eq:time_invariant_finite_horizon_problem}
\end{align}
where symmetric matrices $\mat{Q}, \mat{P} \in \mathcal{S}^n_{\succ 0}$, $\mat{R}\in\mathcal{S}^m_{\succ 0}$ define desired system behavior over planning horizon $N$. The terminal cost, $\vec{x}_{N}^{\top}\mat{P}\vec{x}_{N}$, and terminal constraints, $\mathcal{X}_{f} := \{\vec{x} \in \mathbb{R}^{n} | \mat{A}_{f}\vec{x} \leq \vec{b}_{f}\}$, ensure feasibility and asymptotic stability of the RHC as will be discussed in Sec.~\ref{sec:rhc}. The first input, $\vec{u}_0$, is applied to the system, leading to a new state $\vec{x}(t+1)$, and the process is repeated. Alternatively, instead of recomputing $\vec{u}_{0:N-1}$ at every $t$, it may be desirable to obtain a function $\vec{u}_0 = \mu(\vec{x})$ that specifies the input for an arbitrary state $\vec{x}$.

\subsection{Batch Formulation}
The batch formulation of the problem in \eqref{eq:time_invariant_finite_horizon_problem} is:
\begin{equation}
\label{eq:decoupled_batch_problem}
\begin{aligned}
& \min_{\vec{z}}  
& & J(\vec{z}|\vec{x}) =  \vec{z}^{\top}\mat{H}\vec{z} + \vec{x}^{\top}\mat{Q}\vec{x}\\
& \;\;\text{s.t.}
& & \mat{G}_{\textrm{eq}}\vec{z} = \mat{E}_{\textrm{eq}}\vec{x}, \;\mat{G}_{\textrm{in}}\vec{z} \leq \vec{w}_{\textrm{in}} + \mat{E}_{in}\vec{x}
\end{aligned}
\end{equation}
with the following definitions
\begin{align}
\vec{z} &= [\vec{x}^{\top}_{1} \dots \vec{x}^{\top}_{N} ~ \vec{u}^{\top}_{0} \dots \vec{u}^{\top}_{N-1}] \in \mathbb{R}^{N(n+m)}\notag\\
\mat{H} &= \mathbf{diag}( \mat{I}_{N-1} \otimes \mat{Q}, \mat{P}, \mat{I}_{N}\otimes\mat{R} ) \in \mathbb{R}^{N(n+m) \times N(n+m)}\notag\\
\mat{G}_{\textrm{eq}} &= [\mat{I}_{Nn} - \mat{L}_N \otimes \mat{A} ; -\mat{I}_N \otimes \mat{B}] \in \mathbb{R}^{Nn\times N(n+m)}\notag\\
%\mat{G}_{\text{eq}} &= \left[
%\begin{array}{ccccc:cccccc}
%\mat{I} & ~ & ~ & ~ & ~ &  -\mat{B} & ~ & ~ & ~ & ~ &\\
%-\mat{A} & \mat{I} & ~ &  ~ & ~ & ~ & -\mat{B} & ~ & ~ & ~ &\\
%~ & ~ & \ddots & \ddots &  ~ & ~ & ~ & ~ & \ddots &\\
%~ & ~ & ~ & -\mat{A} & \mat{I} ~ & ~ & ~ & ~ &  ~ & -\mat{B}\\
%\end{array} \right ] \in \mathbb{R}^{Nn\times N(n+m)}\\
\mat{E}_{\textrm{eq}} &= \vec{e}_{1}\otimes \mat{A}\in \mathbb{R}^{Nn\times n} \label{eq:decoupled_constraint_matrices}\\
\mat{G}_{\textrm{in}} &= \mathbf{diag}(\vec{0}_{c_x},\mat{I}_{N-1}\otimes \mat{A}_{x},\mat{A}_{f}, \mat{I}_{N}\otimes \mat{A}_{u})\notag\\
\mat{E}_{\textrm{in}} &= -\vec{e}_{1}\otimes \mat{A}_{x} \in \mathbb{R}^{(Nc_x+c_f+Nc_u)\times n}\notag\\
\mat{w}_{\textrm{in}} &= [\vec{1}_{N} \otimes \vec{b}_{x}; \vec{b}_{f}; \vec{1}_N \otimes \vec{b}_{u}] \in \mathbb{R}^{(Nc_x + c_f + Nc_u)}\notag 
\end{align}
where $\mat{L}_N$ is the matrix of size $N \times N$ with ones on the first subdiagonal and zeros elsewhere, $\vec{e}_{i}$ is the $i$-th standard basis vector, $\vec{1}_N$ is the vector of all ones of size $N$, $c_{x}$, $c_{f}$, $c_{u}$ denote the number of constraints specified by the rows of $\mat{A}_{x}$, $\mat{A}_{f}$, $\mat{A}_{u}$, "$;$" denotes vertical concatenation, and $\otimes$ denotes the Kronecker product. When the parameter $\vec{x}$ is fixed, \eqref{eq:decoupled_batch_problem} is a \textit{quadratic program} (QP) and the solution is a vector $\vec{z}$. 

A common choice for the terminal cost matrix $\mat{P}$ is the solution $\mat{P}_{\infty}$ to the algebraic Riccati equation \cite[Ch. 8]{borrelli2017predictive}:
\begin{equation} 
\label{eq:dare}
\scaleMathLine[0.9]{\mat{P}_{\infty} = \mat{A}^{\top}\mat{P}_{\infty} \mat{A}+\mat{Q}-\mat{A}^{\top}\mat{P}_{\infty} \mat{B}(\mat{B}^{\top}\mat{P}_{\infty} \mat{B}+\mat{R})^{-1}\mat{B}^{\top}\mat{P}_{\infty} \mat{A}.}
\end{equation}
A common choice for $\mathcal{X}_{f}$ is the maximal positively invariant set, $\mathcal{O}^{LQR}_{\infty}$, of the linear quadratic regulator (LQR), which is computable via reachability analysis \cite[Ch.11]{borrelli2017predictive} using standard toolboxes \cite{MPT3}. Note that $\mathcal{O}^{LQR}_{\infty}$ is easily computable even for large systems, and is different from the maximal control invariant set, $\mathcal{C}^{\infty}$, used in \cite{Chen2018ApproximatingEM}, which is very challenging to compute for large systems.

\subsection{Feasibility and Duality}
\label{sec:feasibility_and_duality}

%These positive definite assumptions ensure strict convexity of the resulting optimization problems. 
For a given state $\vec{x}$, \eqref{eq:decoupled_batch_problem} is a strictly convex QP because if $\mat{Q}$, $\mat{P}$ are positive definite then $\mat{H}$ is positive definite too. While this assumption can be relaxed to allow positive semi-definite matrices, it simplifies the development of our approach for approximating the solution of \eqref{eq:decoupled_batch_problem}, which is unique in this case. Let $\mathcal{X}_0 \subseteq \mathcal{X}$ be the set of parameters $\vec{x}$ for which \eqref{eq:decoupled_batch_problem} is feasible~\cite[Defn. 6.3]{borrelli2017predictive}. Given $\vec{x} \in \mathcal{X}_0$ and an associated primal feasible $\vec{z}$, the \textit{suboptimality level} of $\vec{z}$ is:
%For a given state $\vec{x}$, \eqref{eq:decoupled_batch_problem} is a strictly convex QP because the matrix $\mat{H}$ is positive definite. Let $\mathcal{X}_0 \subseteq \mathcal{X}$ be the set of parameters $\vec{x}$ for which \eqref{eq:decoupled_batch_problem} is feasible~\cite[Defn. 6.3]{borrelli2017predictive}. Given $\vec{x} \in \mathcal{X}_0$ and an associated primal feasible $\vec{z}$, the \textit{suboptimality level} of $\vec{z}$ is:
\begin{equation}
\label{eq:suboptimality_level}
\sigma(\vec{z}|\vec{x}) := J(\vec{z}|\vec{x}) - J(\vec{z}^*|\vec{x}) \geq 0,
\end{equation}
where $\vec{z}^*$ is the unique minimizer of \eqref{eq:decoupled_batch_problem}. We introduce \textit{dual variables} $(\vec{\nu}, \vec{\lambda})$~\cite[Ch.5]{boyd2004convex}, and define the Lagrangian associated with \eqref{eq:decoupled_batch_problem} as:
\begin{align}
\mathcal{L}(\vec{z}, \vec{\nu}, \vec{\lambda}| \vec{x}) := \vec{z}^{\top}\mat{H}&\vec{z} + \vec{x}^{\top}\mat{Q}\vec{x} + 
\vec{\nu}^{\top}\prl{\mat{G}_{\textrm{eq}}\vec{z} - \mat{E}_{\textrm{eq}}\vec{x}}\notag\\
& +\vec{\lambda}^{\top}\prl{\mat{G}_{\textrm{in}}\vec{z} - \vec{w}_{\textrm{in}} - \mat{E}_{in}\vec{x}}.\label{eq:lagrangian}
\end{align}
The Lagrangian dual of a minimization QP is a maximization QP over $(\vec{\nu}, \vec{\lambda})$ with objective function $d(\vec{\nu}, \vec{\lambda}| \vec{x}) := \inf_{\vec{z}} \mathcal{L}(\vec{z}, \vec{\nu}, \vec{\lambda}| \vec{x})$.
%\begin{equation}
%\label{eq:dual}
%\begin{aligned}
%d(\vec{\nu}, \vec{\lambda}| \vec{x}) &= \inf_{\vec{z}} \mathcal{L}(\vec{z}, \vec{\nu}, \vec{\lambda}| \vec{x}).
%\end{aligned}
%\end{equation}
The dual variables $(\vec{\nu}, \vec{\lambda})$ are \textit{dual feasible} if $\vec{\lambda} \geq 0$. For a given parameter $\vec{x}$, any primal feasible $\vec{z}$, any dual feasible $(\vec{\nu}, \vec{\lambda})$, and optimal primal-dual variables $(\vec{z}^{*}, \vec{\nu}^{*}, \vec{\lambda}^{*})$, \textit{strong duality} holds because Slater's conditions are satisfied~\cite[Ch.5]{boyd2004convex}:
\begin{equation}
\label{eq:strong_duality}
\begin{aligned}
d(\vec{\nu}, \vec{\lambda}| \vec{x}) \leq d(\vec{\nu}^{*}, \vec{\lambda}^{*}| \vec{x}) &= \mathcal{L}(\vec{z}^{*},\vec{\nu}^{*},\vec{\lambda}^{*}| \vec{x})\\
&= J(\vec{z}^{*}| \vec{x}) \leq J(\vec{z}| \vec{x}).
\end{aligned}
\end{equation}
The \textit{feasible duality gap} associated with $\vec{x}$, $\vec{z}$, $\vec{\nu}$, $\vec{\lambda}$ is
\begin{equation}
\label{eq:duality_bound_suboptimality}
\eta(\vec{z}, \vec{\nu}, \vec{\lambda}| \vec{x}) := J(\vec{z}| \vec{x}) - d(\vec{\nu},\vec{\lambda}|\vec{x}) \geq \sigma(\vec{z}|\vec{x}) \geq 0,
\end{equation}
which is an upper bound on the suboptimality level $\sigma(\vec{z}|\vec{x})$ for any feasible $(\vec{z}, \vec{\nu}, \vec{\lambda})$ due to \eqref{eq:strong_duality}.

\subsection{Receding Horizon Control}
\label{sec:rhc}

%Fixing $\vec{x}$ yields a QP in Prob.~\eqref{eq:decoupled_batch_problem} over the primal variables $\vec{z}$. 
Alternatively, \eqref{eq:decoupled_batch_problem} can be viewed as a \textit{multiparametric quadratic program} (mp-QP) over functions that map $\vec{x}$ to primal variables $\vec{z}$ \cite[Ch. 6]{borrelli2017predictive}.
\begin{defn}
A planner is a function $\pi: \mathcal{X}_{0} \rightarrow \mathbb{R}^{N(n+m)}$ that maps a parameter $\vec{x}$ to a decision variable $\vec{z}$. A planner $\pi$ is primal feasible if $\vec{z} = \pi(\vec{x})$ is a primal feasible variable $\forall\vec{x} \in \mathcal{X}_{0}$ in~\eqref{eq:decoupled_batch_problem}. It is optimal, if $\vec{z} = \pi(\vec{x})$ is the optimal solution $\forall\vec{x} \in \mathcal{X}_{0}$.
\end{defn}
%
%The following definitions show how a planner $\pi$ can be implemented as a receding horizon controller (RHC) $\mu$. 

%\begin{defn} (Receding Horizon Control)
%Let $\pi$ be an open loop mapping for optimization problem~\eqref{eq:decoupled_batch_problem}.  Define the receding horizon controller (RHC) $\mu:\mathbb{R}^{Nn+Nm} \rightarrow \mathbb{R}^{m}$ to be a linear mapping that returns the first control input $\vec{u}_{0}$ in $\vec{z} = \pi(\vec{x}(t))$. If $\pi$ is a primal feasible parametric solution, then $\mu$ is a primal feasible RHC.
%\end{defn}

%\begin{defn}[Closed Loop System and RHC]
\begin{defn}
The receding horizon controller (RHC) corresponding to a planner $\pi$ is a function $\mu:\mathcal{X}_{0} \rightarrow \mathbb{R}^{m}$ that returns the first control input in $\vec{z} = \pi(\vec{x})$, i.e., $\vec{u}_{0} = \mu(\vec{x})$.
\end{defn}

Recursive feasiblity and asymptotic stability \cite[Ch.12]{borrelli2017predictive} are two properties that should be guaranteed for an RHC. Primal feasibility is a property of a planner $\pi$ and the open-loop optimization problem, while recursive feasibility is a property of an RHC $\mu$ and the corresponding closed loop system $\vec{x}(t+1) = \mat{A}\vec{x}(t) + \mat{B} \mu(\vec{x}(t))$ for $t \geq 0$. In general, primal feasibility of the open-loop optimization problem does not imply recursive feasibility of the corresponding closed-loop controller. Recursive feasibility of a control law is a necessary, but not sufficient, condition for asymptotic stability.

\section{Explicit Control Policy Approximation}
%\section{Offline Neural Network Training}
\label{sec:neural_network}

The first step of our approach is offline training of a deep neural network $\tilde{\pi}(\vec{x}|\vec{\theta})$ with parameters $\vec{\theta}$ that provides a candidate solution $\vec{z}$ for the QP in \eqref{eq:decoupled_batch_problem}. We choose to approximate the entire primal prediction over the planning horizon instead of the control law $\mu$. Although this choice requires approximating $N(n+m)$ variables instead of only $m$, the additional predicted variables may be used to obtain the desired guarantees through an online primal active set solver in Sec.~\ref{sec:guarantees_certificates}.

\begin{defn}
A deep neural network (\textit{DNN}) $\tilde{\pi}(\vec{x}|\vec{\theta})$ with $L$ layers is a composition of $L$ affine functions: $\vec{\lambda}_l(\vec{x}) := \mat{\theta}^{W}_l\vec{x}+\vec{\theta}^{b}_{l}$,
each except the last one followed by a nonlinear activation function $\vec{h}$, so that: $\tilde{\pi}(\vec{x}|\vec{\theta}) = \vec{\lambda}_{L} \circ \vec{h} \circ \vec{\lambda}_{L-1} \circ \cdots \circ \vec{h} \circ \vec{\lambda}_{1}(\vec{x})$.
The DNN parameters are $\vec{\theta}:=\{(\mat{\theta}^{W}_{l}, \vec{\theta}^{b}_{l})\}_{l=1}^L$. Each layer $l$ has width defined by the number of rows of $\mat{\theta}^{W}_{l}$ and $\vec{\theta}^{b}_{l}$.
\end{defn}

The activation function $\vec{h}$ is fixed (not optimized) and typically chosen as a sigmoid, hypertangent, or ReLU~\cite[Ch.6]{Goodfellow-et-al-2016}. As noted in~\cite{Chen2018ApproximatingEM}, a ReLU activation $\vec{h}(\vec{x}) := \max\{\vec{0},\vec{x}\}$, where the $\max$ is applied elementwise, is well-suited for mp-QP problems because any piecewise-affine function on polyhedra, such as the solution for an mp-QP~\cite[Thm. 6.7]{borrelli2017predictive}, can be represented exactly by a ReLU DNN~\cite[Thm. 2.1]{arora2018understanding}. We restrict the approximation of a planner $\pi$ to the class of functions represented by a ReLU DNN $\tilde{\pi}$.

A train data set $\mathcal{D} := \crl{\vec{x}_{i}, \vec{z}_{i}^{*}, \vec{\nu}_{i}^{*}, \vec{\lambda}_{i}^{*}}_i$ of optimal primal and dual variables is necessary for supervised learning of the neural network parameters $\vec{\theta}$ that approximate an optimal planner $\pi^{*}(\vec{x}) \approx\tilde{\pi}(\vec{x}|\vec{\theta})$. Optimizing a least-squares loss function measuring the discrepancy between $\vec{z}_{i}^{*}$ and $\vec{z}_i := \tilde{\pi}(\vec{x}_i|\vec{\theta})$ is a common choice but it fails to explicitly account for the constraints in \eqref{eq:decoupled_batch_problem}. We propose a loss function which incorporates information from the optimal primal and dual variables by measuring the discrepancy of the Lagrangian values:
\begin{equation*}
%\label{eq:least_squares_Lagrangian}
\ell(\vec{\theta}) := \sum_{i=1}^{|\mathcal{D}|}(\mathcal{L}(\tilde{\pi}(\vec{x}_{i}|\vec{\theta}), \vec{\nu}_{i}^{*}, \vec{\lambda}_{i}^{*}| \vec{x}_{i}) - \mathcal{L}(\vec{z}_{i}^{*}, \vec{\nu}_{i}^{*}, \vec{\lambda}_{i}^{*}|\vec{x}_{i}))^{2}.
\end{equation*}
%
% ~\eqref{eq:least_squares_Lagrangian}
There are a variety of potential loss functions that utilize the Lagrangian. We chose this particular one because it allows supervision from both the optimal primal and dual variables and was effective in our experiments (Sec.~\ref{sec:results}).
%due to its simplicity and practical effectiveness. 
Since $\pi^*(\vec{x})$ is piecewise-affine, it can be represented exactly by $\tilde{\pi}(\vec{x}|\vec{\theta})$ with sufficient depth and width. By strict convexity, the primal variables $\vec{z}^{*}_{i}$ are unique minimizers of each term in the loss $\ell(\vec{\theta})$. We train the network using stochastic gradient descent \cite{kingma2014adam}, sampling a subset (mini-batch) of the train data $\mathcal{D}$, computing the loss, $\ell(\vec{\theta})$, over the mini-batch, and using backpropagation to compute the loss gradient and update the network parameters $\vec{\theta}$. The mini-batch sampling is continued until the data set is exhausted, which concludes an epoch, and is then repeated until convergence.

\section{Primal Active Set Method for Guarantees}
\label{sec:guarantees_certificates}

% represents an approximate planner $\pi$ that maps states $\vec{x}$ to primal predictions $\vec{z}$.
% recomputing $\vec{z} = \tilde{\pi}(\vec{x}(t)|\vec{\theta})$ and  
The trained neural network $\tilde{\pi}(\vec{x}|\vec{\theta})$ is a planner that maps states $\vec{x}$ to primal predictions $\vec{z}$. It can be implemented as an RHC $\mu$ by applying the first control input $\vec{u}_{0}$ from $\vec{z} = \tilde{\pi}(\vec{x}(t)|\vec{\theta})$ at each encountered state $\vec{x}(t)$ over time. Thm.~\ref{thm:approximate_RHC_stability} summarizes the conditions that ensure recursive feasibility and asymptotic stability of $\mu$.

\begin{thm}
\label{thm:approximate_RHC_stability}
Let $\pi$ be a planner for the finite-horizon deterministic optimal control problem in~\eqref{eq:time_invariant_finite_horizon_problem} with $\vec{b}_{x},\vec{b}_{u},\vec{b}_{f} > \vec{0}$. Assume that the terminal constraint set $\mathcal{X}_{f}$ is control invariant and the terminal cost $p(\vec{x}) := \vec{x}^\top \mat{P} \vec{x}$ is a control Lyapunov function~\cite[Rmk 12.3]{borrelli2017predictive} over $\mathcal{X}_{f}$. If for all $\vec{x} \in \mathcal{X}_{0}$, $\vec{z} = \pi(\vec{x})$ is primal feasible and there exists a function $\gamma(\vec{x})$ such that:
\begin{equation}
\label{eq:gamma_conditions}
\begin{aligned}
0 \leq \sigma(\pi(\vec{x})| \vec{x}) \leq \gamma(\vec{x}) \leq \vec{x}^\top \mat{Q}\vec{x},
\end{aligned}
\end{equation}
then the RHC $\mu$ corresponding to $\pi$ is recursively feasible and asymptotically stable with domain of attraction $\mathcal{X}_{0}$ for system~\eqref{eq:system} subject to constraints~\eqref{eq:constraints}.
\end{thm}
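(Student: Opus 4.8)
The plan is to treat this as a suboptimal-MPC stability argument in the spirit of \cite{jones2010polytopic,zeilinger2011real}, using the optimal value function $J^*(\vec{x}):=J(\vec{z}^*|\vec{x})$ of \eqref{eq:decoupled_batch_problem} as a Lyapunov function for the closed loop $\vec{x}^+=\mat{A}\vec{x}+\mat{B}\mu(\vec{x})$. Two things must be shown: recursive feasibility, i.e.\ $\vec{x}\in\mathcal{X}_0\Rightarrow\vec{x}^+\in\mathcal{X}_0$; and a Lyapunov decrease of $J^*$ along the closed loop. Throughout I use the standard content of \cite[Rmk.~12.3]{borrelli2017predictive}: because $p$ is a control Lyapunov function on the control-invariant set $\mathcal{X}_f$ (with $\mathcal{X}_f\subseteq\mathcal{X}$, as for the standard choice $\mathcal{O}^{LQR}_{\infty}$ in Sec.~\ref{sec:preliminaries}), there is a feedback $\kappa:\mathcal{X}_f\to\mathcal{U}$ with $\mat{A}\vec{x}+\mat{B}\kappa(\vec{x})\in\mathcal{X}_f$ and $p(\mat{A}\vec{x}+\mat{B}\kappa(\vec{x}))-p(\vec{x})\le-\big(\vec{x}^\top\mat{Q}\vec{x}+\kappa(\vec{x})^\top\mat{R}\kappa(\vec{x})\big)$ for all $\vec{x}\in\mathcal{X}_f$. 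The hypothesis $\vec{b}_x,\vec{b}_u,\vec{b}_f>\vec{0}$ puts the origin in the interior of $\mathcal{X},\mathcal{U},\mathcal{X}_f$ and hence of $\mathcal{X}_0$, which makes the constructions below well posed and (as already used for Slater's condition) keeps strong duality available.

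\emph{Recursive feasibility.} Fix $\vec{x}\in\mathcal{X}_0$. The primal-feasible $\vec{z}=\pi(\vec{x})$ encodes inputs $\vec{u}_0,\dots,\vec{u}_{N-1}\in\mathcal{U}$ and a trajectory $\vec{x}=\vec{x}_0,\vec{x}_1,\dots,\vec{x}_N$ with $\vec{x}_k\in\mathcal{X}$ and $\vec{x}_N\in\mathcal{X}_f$, and by the dynamics equality the applied successor state is $\vec{x}^+=\mat{A}\vec{x}+\mat{B}\vec{u}_0=\vec{x}_1$. I then exhibit the shifted candidate at $\vec{x}^+$ with inputs $(\vec{u}_1,\dots,\vec{u}_{N-1},\kappa(\vec{x}_N))$ and states $(\vec{x}_1,\dots,\vec{x}_N,\mat{A}\vec{x}_N+\mat{B}\kappa(\vec{x}_N))$: every state iterate lies in $\mathcal{X}$ (using $\mathcal{X}_f\subseteq\mathcal{X}$ for the last two), the final state lies in $\mathcal{X}_f$, and every input lies in $\mathcal{U}$, so this candidate is primal feasible for \eqref{eq:decoupled_batch_problem} at $\vec{x}^+$, whence $\vec{x}^+\in\mathcal{X}_0$. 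Induction from $\vec{x}(0)\in\mathcal{X}_0$ gives recursive feasibility with region $\mathcal{X}_0$.

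\emph{Lyapunov decrease.} Telescoping the cost of the shifted candidate against $J(\vec{z}|\vec{x})$: the stage terms for $k=1,\dots,N-1$ cancel, the $k=0$ stage term $\vec{x}^\top\mat{Q}\vec{x}+\vec{u}_0^\top\mat{R}\vec{u}_0$ drops out, and the terminal contribution $\vec{x}_N^\top\mat{Q}\vec{x}_N+\kappa(\vec{x}_N)^\top\mat{R}\kappa(\vec{x}_N)+p(\mat{A}\vec{x}_N+\mat{B}\kappa(\vec{x}_N))-p(\vec{x}_N)$ is $\le0$ by the control-Lyapunov inequality. Hence the candidate cost is $\le J(\vec{z}|\vec{x})-\vec{x}^\top\mat{Q}\vec{x}-\vec{u}_0^\top\mat{R}\vec{u}_0$, and optimality of $\vec{z}^*$ at $\vec{x}^+$ gives $J^*(\vec{x}^+)\le J(\pi(\vec{x})|\vec{x})-\vec{x}^\top\mat{Q}\vec{x}-\mu(\vec{x})^\top\mat{R}\mu(\vec{x})$. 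Substituting $J(\pi(\vec{x})|\vec{x})=J^*(\vec{x})+\sigma(\pi(\vec{x})|\vec{x})$ and invoking \eqref{eq:gamma_conditions} in the form $\sigma(\pi(\vec{x})|\vec{x})\le\gamma(\vec{x})\le\vec{x}^\top\mat{Q}\vec{x}$ yields $J^*(\vec{x}^+)-J^*(\vec{x})\le\gamma(\vec{x})-\vec{x}^\top\mat{Q}\vec{x}-\mu(\vec{x})^\top\mat{R}\mu(\vec{x})\le0$.

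\emph{Stability and the main obstacle.} $J^*$ is continuous and positive definite on $\mathcal{X}_0$ ($\vec{x}^\top\mat{Q}\vec{x}\le J^*(\vec{x})$) and is quadratically upper bounded near the origin, since there all constraints are inactive ($\vec{b}>\vec{0}$) and $J^*$ coincides with the finite-horizon LQR cost; with the decrease above this gives Lyapunov stability of the origin. Summing the decrease gives $\sum_{t\ge0}\big(\vec{x}(t)^\top\mat{Q}\vec{x}(t)-\gamma(\vec{x}(t))+\mu(\vec{x}(t))^\top\mat{R}\mu(\vec{x}(t))\big)\le J^*(\vec{x}(0))<\infty$; since sublevel sets of $J^*$ are bounded, the closed-loop trajectory stays bounded with $\mu(\vec{x}(t))\to\vec{0}$ and $\vec{x}(t)^\top\mat{Q}\vec{x}(t)-\gamma(\vec{x}(t))\to0$. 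Upgrading this to $\vec{x}(t)\to\vec{0}$ (hence asymptotic stability with domain of attraction $\mathcal{X}_0$) is the crux, because the computed decrease is strict only where $\gamma(\vec{x})<\vec{x}^\top\mat{Q}\vec{x}$ or $\mu(\vec{x})\neq\vec{0}$: one must exclude a nonzero $\omega$-limit set on which $\gamma(\vec{x})=\vec{x}^\top\mat{Q}\vec{x}$ and $\mu(\vec{x})=\vec{0}$. I would close this either with the mild strengthening that $\vec{x}^\top\mat{Q}\vec{x}-\gamma(\vec{x})$ be positive definite — making $J^*$ a strict Lyapunov function and finishing at once — or, under the stated hypothesis, with a LaSalle-type argument: $\pi$ is continuous (a ReLU network), so the closed-loop map is continuous; $J^*$ is constant on the compact $\omega$-limit set; the decrease forces $\mu\equiv\vec{0}$ there, so the set is invariant under $\vec{x}\mapsto\mat{A}\vec{x}$; and stabilizability of $(\mat{A},\mat{B})$ together with the terminal constraint $\vec{x}_N\in\mathcal{X}_f$ forces that set to reduce to $\{\vec{0}\}$.
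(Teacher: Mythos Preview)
The paper does not give an argument at all: it simply invokes \cite[Thm.~12.1]{borrelli2017predictive} for recursive feasibility and \cite[Thm.~13.1]{borrelli2017predictive} for asymptotic stability. Your proposal is, in effect, an attempt to reprove those textbook results from scratch. The recursive-feasibility part (shifted candidate plus terminal law $\kappa$) is the standard construction and is correct; the Lyapunov computation leading to
\[
J^*(\vec{x}^+)-J^*(\vec{x}) \le \sigma(\pi(\vec{x})|\vec{x}) - \vec{x}^\top\mat{Q}\vec{x} - \mu(\vec{x})^\top\mat{R}\mu(\vec{x}) \le -\,\mu(\vec{x})^\top\mat{R}\mu(\vec{x})
\]
is also correct, and you rightly flag that this decrease is not strict on the set where $\gamma(\vec{x})=\vec{x}^\top\mat{Q}\vec{x}$ and $\mu(\vec{x})=\vec{0}$.

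Where your argument becomes genuinely incomplete is the LaSalle patch. First, the theorem is stated for an arbitrary planner $\pi$, not for a ReLU network; in the paper the planner of interest is Alg.~\ref{alg:explicit_implicit_planner}, which composes the network with Phase~I/II active-set iterations and has no reason to be continuous, so you cannot import continuity of $\mu$ from ``$\pi$ is a ReLU network.'' Without continuity of the closed-loop map, discrete-time LaSalle is not available as stated. Second, even granting continuity, the last step --- ``stabilizability of $(\mat{A},\mat{B})$ together with $\vec{x}_N\in\mathcal{X}_f$ forces the $\omega$-limit set to be $\{\vec{0}\}$'' --- is not argued. On that set $\mu\equiv\vec{0}$, so the dynamics are $\vec{x}\mapsto\mat{A}\vec{x}$; stabilizability rules out unbounded growth only along strictly unstable modes, but if $\mat{A}$ has eigenvalues on the unit circle one can have a nontrivial compact $\mat{A}$-invariant set, and nothing you wrote excludes $\sigma(\pi(\vec{x})|\vec{x})=\vec{x}^\top\mat{Q}\vec{x}$ there. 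The terminal constraint on $\vec{x}_N$ does not by itself force $\vec{x}=\vec{0}$ when only $\vec{u}_0=0$ is known.

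In short: your route is the textbook route that the paper cites, and it is correct up to the point you yourself mark as the crux; but the closure you propose relies on hypotheses (continuity of $\pi$) that the theorem does not make and on an invariance step that is only sketched. If you want a self-contained proof under exactly \eqref{eq:gamma_conditions}, you either need to justify the LaSalle step without assuming continuity of $\pi$ (e.g., work with the sequence directly and exploit continuity of $J^*$ and of the closed-loop map only through $\mat{A},\mat{B}$, together with a detectability-type argument for the pair $(\mat{Q}^{1/2},\mat{A})$), or acknowledge that the clean conclusion really rests on the cited results in \cite{borrelli2017predictive}, as the paper does.
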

%\vspace*{-3ex}
\begin{proof}
Recursive feasibility of $\mu$ follows from \cite[Thm.~12.1]{borrelli2017predictive}, while asymptotic stability follows from \cite[Thm.~13.1]{borrelli2017predictive}.
\end{proof}
%\vspace*{-1ex}
	
If we guarantee that the neural network output $\vec{z}(t) = \tilde{\pi}(\vec{x}(t)|\vec{\theta})$ at each time step $t$ is primal feasible and that there exists $\gamma(\vec{x}(t))$ that satisfies threshold~\eqref{eq:gamma_conditions}, then according to Thm.~\ref{thm:approximate_RHC_stability}, the RHC $\vec{u}(t) = \mu(\vec{x}(t))$ will be recursively feasible and asymptotically stable. In Sec.~\ref{sec:certificates_feasibility_suboptimality}, we show how to construct $\gamma(\vec{x})$ and check whether a given network output $\vec{z}$ satisfies primal feasibility and threshold~\eqref{eq:gamma_conditions}. Then, in Sec.~\ref{sec:active_set}, we use the network output $\vec{z}$ as initialization of an online primal active set solver which optimizes $\vec{z}$ until primal feasibility and the suboptimality certificate in~\eqref{eq:gamma_conditions} are both satisfied.

%%%========================================================================
\subsection{Obtaining a Suboptimality Certificate}
\label{sec:certificates_feasibility_suboptimality}

Given $\vec{z} = \tilde{\pi}(\vec{x}|\vec{\theta})$, primal feasibility can be checked using the constraints in \eqref{eq:decoupled_batch_problem}. Sec.~\ref{sec:active_set} details how to obtain primal feasibility given a primal infeasible initialization. Once $\vec{z}$ is primal feasible, we obtain dual feasible variables $(\vec{\nu}, \vec{\lambda})$ and define $\gamma(\vec{x})$ as the feasible duality gap $\eta(\vec{z}, \vec{\nu}, \vec{\lambda} | \vec{x})$ in \eqref{eq:duality_bound_suboptimality}. This guarantees that $\sigma(\vec{z} | \vec{x}) \leq \eta(\vec{z}, \vec{\nu}, \vec{\lambda} | \vec{x})$ and we can directly check whether $\eta(\vec{z}, \vec{\nu}, \vec{\lambda} | \vec{x}) \leq \vec{x}^\top \mat{Q}\vec{x}$ to satisfy \eqref{eq:gamma_conditions}.

We use the KKT optimality conditions to derive $(\vec{\nu}, \vec{\lambda})$. First, we determine which constraints in \eqref{eq:decoupled_batch_problem} are active at $\vec{z}$. All $Nn$ equality constraints $\mat{G}_{\textrm{eq}}\vec{z} = \mat{E}_{\textrm{eq}}\vec{x}$ are active by definition. Let $\mathcal{A}$ be the set of inequality constraints $i$ such that $\vec{e}_i^\top\mat{G}_{\textrm{in}}\vec{z} = \vec{e}_i^\top\vec{w}_{\textrm{in}} + \vec{e}_i^\top\mat{E}_{in}\vec{x}$. Due to \textit{complementary slackness}, dual variables corresponding to inactive constraints should be set to $0$. The remaining dual variables $(\vec{\nu}, \vec{\lambda}_{\mathcal{A}})$ must satisfy the KKT conditions for an equality constrained QP:
\begin{equation}
\label{eq:equality_constrained_KKT}
\begin{bmatrix}
2\mat{H} & \mat{G}_{\textrm{eq}}^\top & \mat{G}_{\mathcal{A}}^{\top}\\
\mat{G}_{\textrm{eq}} & \mat{0} & \mat{0}\\
\mat{G}_{\mathcal{A}} & \mat{0} & \mat{0}
\end{bmatrix} \begin{bmatrix} \vec{z}\\\vec{\nu}\\ \vec{\lambda}_{\mathcal{A}} \end{bmatrix} = 
\begin{bmatrix} \mat{0} \\ \mat{E}_{\textrm{eq}}\vec{x} \\ \vec{w}_{\mathcal{A}} +  \mat{E}_{\mathcal{A}}\vec{x} \end{bmatrix},
\end{equation}
where $\mat{G}_{\mathcal{A}}$, $\vec{w}_{\mathcal{A}}$, and $\mat{E}_{\mathcal{A}}$ contain the rows of $\mat{G}_{\textrm{in}}$, $\vec{w}_{\textrm{in}}$, and $\mat{E}_{in}$ corresponding to $\mathcal{A}$. The dual variables can be expressed directly in terms of $\vec{z}$:
\begin{equation}
\scaleMathLine[0.88]{\begin{bmatrix}
\mat{G}_{\textrm{eq}}\mat{H}^{-1} \mat{G}_{\textrm{eq}}^\top & \mat{G}_{\textrm{eq}}\mat{H}^{-1} \mat{G}_{\mathcal{A}}^\top\\
\mat{G}_{\mathcal{A}}\mat{H}^{-1} \mat{G}_{\textrm{eq}}^\top & \mat{G}_{\mathcal{A}}\mat{H}^{-1} \mat{G}_{\mathcal{A}}^\top
\end{bmatrix} \begin{bmatrix} \vec{\nu}\\ \vec{\lambda}_{\mathcal{A}} \end{bmatrix} = 
-2\begin{bmatrix}
\mat{G}_{\textrm{eq}} \\ \mat{G}_{\mathcal{A}}
\end{bmatrix} \vec{z}.}
\end{equation}
Finally, we can ensure that $\vec{\lambda}_{\mathcal{A}}$ are dual feasible by applying an element-wise $\max\{\vec{\lambda}_{\mathcal{A}}, \vec{0}\}$. Given a primal feasible $\vec{z}$, this always yields dual feasible $\vec{\nu}, \vec{\lambda}$ and we can obtain $\eta(\vec{z}, \vec{\nu}, \vec{\lambda} | \vec{x}) = J(\vec{z}| \vec{x}) - d(\vec{\nu}, \vec{\lambda} | \vec{x})$. The dual objective $d(\vec{\nu}, \vec{\lambda} | \vec{x})$ can be computed efficiently by pre-computing $\frac{1}{2}\mat{H}^{-1} [ \mat{G}_{\textrm{eq}}; \mat{G}_{\textrm{in}} ]^\top$ offline.

%The above procedure only checks whether primal feasibility and the suboptimality requirement in~\eqref{eq:duality_bound_suboptimality} are satisfied, but does not guarantee satisfaction. The predicted primal variables $\vec{z}$ may not necessarily be primal feasible and will not yield the certificate of primal feasibility. In addition, the procedure may not yield the certificate of suboptimality because the predicted primal variables or derived dual feasible variables are too conservative, and the resulting $\eta > q(\vec{x}, \vec{0})$. The following Sec.~\ref{sec:active_set} details how we guarantee satisfying both certificates $~\forall~\vec{x} \in \mathcal{X}_{0}$ by using an online primal active set method, and thus obtain the guarantees on recursive feasibility and asymptotic stability of the approximate RHC $\tilde{\mu}$.

%%%========================================================================

%\subsection{Obtaining Guarantees with Primal Active Set Solver}
\subsection{Ensuring Feasibility and Bounded Suboptimality with an Online Primal Active Set Solver}
\label{sec:active_set}

The procedure in Sec.~\ref{sec:certificates_feasibility_suboptimality} only checks whether the neural network output $\vec{z} = \tilde{\pi}(\vec{x}|\vec{\theta})$ is primal feasible and satisfies the suboptimality certificate in~\eqref{eq:duality_bound_suboptimality} but does not provide an approach to modify $\vec{z}$ in case any of the two requirements is violated. We use a primal active-set QP solver~\cite[Ch.~16]{wright1999numerical} online to ensure satisfaction of both conditions. Primal active-set QP methods can be accelerated via warm start from a good initialization point, such as the neural network output $\vec{z}$. 

A primal active-set method employs two phases. In \textit{Phase I}, a linear feasiblity program is solved to ensure that $\vec{z}$ is primal feasible. Starting from this primal feasible point, \textit{Phase II} updates the primal solution and the active constraints $\mathcal{A}$ and solves a sequence of equality-constrained QPs. Phase II maintains primal feasibility throughout the iterations and, hence, can be terminated early, as soon as the suboptimality certificate in~\eqref{eq:gamma_conditions} is achieved, rather than continuing until optimality. We check the duality gap at intermediate iterates using the approach described in Sec.~\ref{sec:certificates_feasibility_suboptimality} and terminate Phase II as soon as the suboptimality certificate is obtained. Compared to other QP solvers, our approach accelerates online computation by reducing the initial iterations using good initializations from the neural network. It also reduces the final iterations using the early termination condition in~\eqref{eq:gamma_conditions}. Interior-point methods are difficult to warm start~\cite{john2008implementation}, while non-primal active-set methods do not guarantee primal feasibility of intermediate iterates and cannot be terminated early~\cite{Ferreau2008}. In general, however, the neural network output can be used to accelerate any QP solver which benefits from a good but potentially infeasible initialization point.

Alg.~\ref{alg:explicit_implicit_planner} presents our final explicit-implicit planner $\pi$. It is guaranteed to terminate since primal active-set QP methods reach an optimal solution in finite time for strictly convex QPs~\cite[Ch.~16]{wright1999numerical} and the associated RHC $\mu$ is guaranteed to be recursively feasible and asymptotically stable by Thm.~\ref{thm:approximate_RHC_stability}.

\begin{algorithm}[t]
	\caption{Explicit-Implicit Planner}
	\label{alg:explicit_implicit_planner}
	\begin{algorithmic}[1]
	\small
	\Require initial state $\vec{x}$, NN parameters $\vec{\theta}$, primal active-set QP solver $\alpha$
	\Procedure{ExplicitImplicitPlanner}{$\vec{x}, \vec{\theta}, \alpha$}
		\State Obtain neural network prediction $\vec{z} = \tilde{\pi}(\vec{x}|\vec{\theta})$
		\If {$\vec{z}$ is not primal feasible } \Comment Sec.~\ref{sec:certificates_feasibility_suboptimality}
		  \State Perform \textit{Phase I} (feasibility) of $\alpha$
		\EndIf
		\While{\textbf{not} \Call{CertifySuboptimality}{$\vec{x},\vec{z}$} }
		  \State Update $\vec{z}$ via one \textit{Phase II} iteration of $\alpha$
		\EndWhile
		\State \Return $\vec{z}$
  \EndProcedure
  \Procedure{CertifySuboptimality}{$\vec{x},\vec{z}$}
    \State Obtain dual feasible variables $\vec{\nu}, \vec{\lambda}$ \Comment Sec.~\ref{sec:certificates_feasibility_suboptimality}
    \If{$\eta(\vec{z}, \vec{\nu}, \vec{\lambda} | \vec{x}) \leq \vec{x}^\top \mat{Q}\vec{x}$} \textbf{return} True \Comment Eqn.~\eqref{eq:duality_bound_suboptimality}
    \Else{ }\textbf{return} False
    \EndIf
  \EndProcedure

	\end{algorithmic}
\end{algorithm}
\section{Scaling to Large Systems}
\label{sec:space_filling}

The efficiency of Alg.~\ref{alg:explicit_implicit_planner} online depends on the neural network $\tilde{\pi}(\vec{x}|\vec{\theta})$ accurately approximating an optimal planner $\pi^{*}(\vec{x})$ for the mp-QP in~\eqref{eq:decoupled_batch_problem}. If $\tilde{\pi}(\vec{x}|\vec{\theta})$ approximates $\pi^{*}(\vec{x})$ perfectly, Alg.~\ref{alg:explicit_implicit_planner} does not need to perform any computations online. However, if $\tilde{\pi}(\vec{x}|\vec{\theta})$ is a poor approximation of $\pi^{*}(\vec{x})$, Alg.~\ref{alg:explicit_implicit_planner} may need to perform many Phase I and Phase II iterations.

Providing a large train set $\mathcal{D} = \crl{\vec{x}_{i}, \vec{z}_{i}^{*}, \vec{\nu}_{i}^{*}, \vec{\lambda}_{i}^{*}}_i$ is important for good neural network performance and generalization. This data set is generated offline using a QP solver to obtain optimal solutions $(\vec{z}_{i}^*,\vec{\nu}_{i}^{*}, \vec{\lambda}_{i}^{*})$ for initial states $\vec{x}_{i}$ that render \eqref{eq:decoupled_batch_problem} feasible. The main challenge is that the set $\mathcal{X}_0$ of initial states that make \eqref{eq:decoupled_batch_problem} feasible cannot be described explicitly, e.g., via halfspace or vertex representation. This makes sampling $\vec{x}_{i}$ from $\mathcal{X}_0$ challenging, especially for high-dimensional systems with long planning horizons.

The set $\mathcal{X}_0$ can only be described by a membership oracle~\cite{vempala2005geometric}, i.e., for a given $\vec{x} \in \mathcal{X}$, a QP solver can report whether~\eqref{eq:decoupled_batch_problem} is feasible, i.e., $\vec{x} \in \mathcal{X}_0$ or not. Unfortunately, a simple approach, such as \emph{rejection sampling}, which generates samples $\vec{x}_i$ uniformly from $\mathcal{X}$ and only keeps those that are feasible according to the QP solver, cannot be used because the probability of sampling feasible $\vec{x}_i$ decreases quickly with the system dimension and the number of constraints in $\mathcal{X}$, $\mathcal{U}$, and $\mathcal{X}_f$.

Instead of independent sampling from $\mathcal{X}$, we propose a \emph{geometric random walk} technique which generates new samples based on previous successful samples from $\mathcal{X}_0$. The proposed approach starts with one feasible sample $\vec{x}_0 \in \mathcal{X}_0$, e.g., chosen as the system equilibrium state. We pick a random line $l$ at the current feasible point and iteratively take small steps along the chord $l \cap \mathcal{X}_{0}$. At each potential point $\vec{x}_i$ along $l$, we check whether the QP in~\eqref{eq:decoupled_batch_problem} with parameter $\vec{x}_i$ is feasible. If yes, the algorithm moves to $\vec{x}_i$ and proceeds along $l$, otherwise, it stays at $\vec{x}_{i-1}$ and picks a new random direction. To select random directions, we generate a set of goal states $\vec{x}_g \in \mathcal{X}$ using a Sobol sequence~\cite{sobol1967distribution}. Alg.~\ref{alg:space_filling_tree} summarizes the proposed approach for data set generation. It generates one large data set and then splits it into a train set, buffer set, and test set. Due to the sequential nature of the data set generation, the buffer set is needed to ensure that the train and test sets do not contain overlapping seed and goal points. Fig.~\ref{fig:DatasetGen} illustrates the behavior of Alg.~\ref{alg:space_filling_tree} for a double integrator system described in Sec.~\ref{sec:system_description}.

%Our method picks a random line $l$ at the current point $\vec{x}$ and goes to a random point on the chord $l \cap \mathcal{X}_{0}$. At each potential point $\vec{x}'$, the random walk will queries the membership oracle and moves to $\vec{x}'$ if the oracle returns Yes, otherwise stay at the current point $\vec{x}$. Our data set generation algorithm is given a set of goal states, as well as one initial feasible state to initialize a set of seed states. We use the Sobol sequence~\cite{sobol1967distribution} as the goal states, and the system equilibrium state as the initial feasible state. Our algorithm then iteratively uses these seed and goal sets to efficiently search for other feasible states, as well as their corresponding optimal solution. It then uses these feasible states to update the set of seed states. The full algorithm is defined in Alg.~\ref{alg:space_filling_tree} and Alg.~\ref{alg:line_solve}. Figs.~\ref{fig:DatasetGen1} and \ref{fig:DatasetGen2} provide illustrations of its behavior on Sys. 1. We first generate one large dataset, and then split it into a train set, buffer set, and test set. Due to the sequential nature of our dataset generation algorithm, we introduce this buffer set to ensure that the train and test set do not contain overlapping seed and goal points.

\begin{algorithm}[t]
	\caption{Data Set Generation}
	\label{alg:space_filling_tree}
  \begin{algorithmic}[1]
  \small
	\Require number of goal points $N_{trn}$, $N_{bf}$, $N_{tst}$ for the train set, buffer set, and test set; step size $d > 0$; polyhedron $\mathcal{X}$; primal active-set QP solver $\alpha$
	\Procedure{GenerateData}{$N_{trn}$, $N_{bf}$, $N_{tst}$, $d$, $\mathcal{X}$, $\alpha$}
	  \State Let $\vec{s}_0 = (\vec{x}_{0}, \vec{z}_{0}, \vec{\nu}_{0}, \vec{\lambda}_{0}, \vec{a}_0)$ be an initial seed with \hspace*{2mm} optimal primal, dual, and auxiliary solver variables
	  \State Let $\mathcal{G}_{trn} \cup \mathcal{G}_{bf} \cup \mathcal{G}_{tst} \subset \mathcal{X}$ be a set of $N_{trn} + N_{bf} + N_{tst}$ \hspace*{2mm} points of a Sobol sequence in $\mathcal{X}$
	  \State $\mathcal{S}_{trn}, \mathcal{D}_{trn} = \Call{RandomWalk}{\mathcal{G}_{trn}, \{\vec{s}_0\}, d, \alpha}$
		\State $\mathcal{S}_{bf\phantom{f}}, \mathcal{D}_{bf} = \Call{RandomWalk}{\mathcal{G}_{bf}, \mathcal{S}_{trn}, d, \alpha}$ 
		\State $\mathcal{S}_{tst}, \mathcal{D}_{tst} = \Call{RandomWalk}{\mathcal{G}_{tst}, \mathcal{S}_{bf} \backslash \mathcal{S}_{trn},d, \alpha}$ 
		\State \Return $\mathcal{D}_{trn},\mathcal{D}_{tst}$
	\EndProcedure
	\Procedure{RandomWalk}{$\mathcal{G}$, $\mathcal{S}$, $d$, $\alpha$}
	  \State $\mathcal{D} = \varnothing$
	  \For{$\vec{x}_{g} \in \mathcal{G}$}
	    \State Sample seed tuple $\vec{s} \in \mathcal{S}$
	    \State $\mathcal{D}' = \Call{LineSolve}{\vec{s},\vec{x}_{g}, d, \alpha}$
	    \State $\mathcal{D} = \mathcal{D} \cup \mathcal{D}'$
	    \State Place last seed from $\mathcal{D}'$ in $\mathcal{S}$
	  \EndFor
	  \State \Return $\mathcal{S}, \mathcal{D}$
	\EndProcedure
	\Procedure{LineSolve}{$\vec{s}_0,\vec{x}_g,d,\alpha$}
	  \State Let $\vec{x}$ be the first element of $\vec{s}_0 = (\vec{x}_0, \vec{z}_0, \vec{\nu}_0, \vec{\lambda}_0, \vec{a}_0)$
	  \State $n = \|\vec{x}_{g} - \vec{x}\|$, $\vec{x}_n = (\vec{x}_{g} - \vec{x})/n$ 
	  \For{$i=1 \dots \ceil{n/d}$}
	    \State $\vec{x}_i = \vec{x} + id\vec{x}_n$
	    \State Let $QP(\vec{x}_i)$ be problem~\eqref{eq:decoupled_batch_problem} with parameter $\vec{x}_i$
	    \State $(\textrm{success}, \vec{s}_{i}$) = Hot start $\alpha$ with $\vec{s}_{i-1}$ on $QP(\vec{x}_i)$
	    \If {$\textrm{success}$} $\mathcal{D} = \mathcal{D} \cup \vec{s}_i$
	    \Else{} \textbf{break}
	    \EndIf
	  \EndFor
	  \State \Return{$\mathcal{D}$}
	\EndProcedure
	\end{algorithmic}
\end{algorithm}

\begin{figure}[ht]
\centering
\begin{subfigure}[b]{.475\textwidth}
  \includegraphics[width=\textwidth]{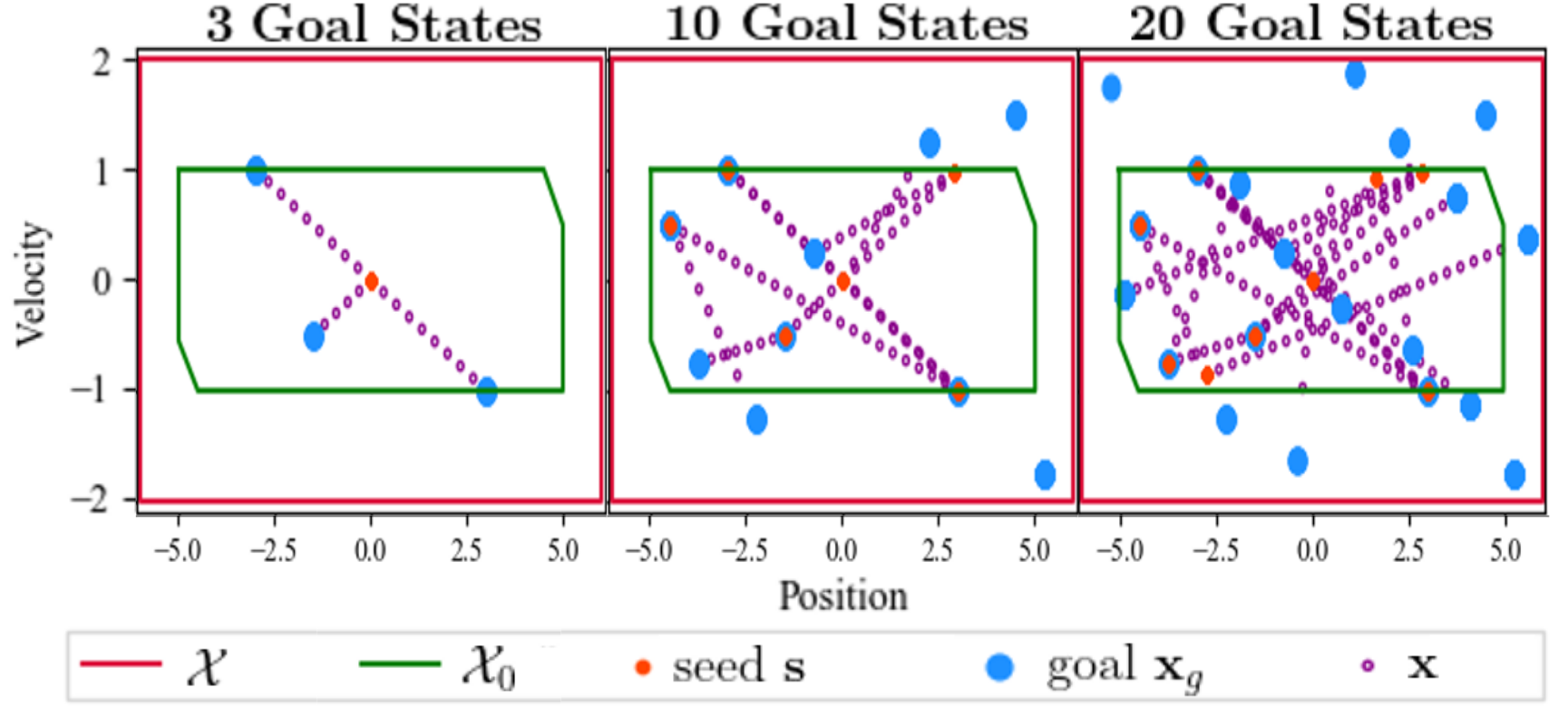}
  \caption{\textbf{Space Filling Generator}: Data generation procedure with varying number of goal states.}
  \label{fig:DatasetGen1}
\end{subfigure} 
\vskip\baselineskip
\begin{subfigure}[b]{.475\textwidth}
  \includegraphics[width=\textwidth]{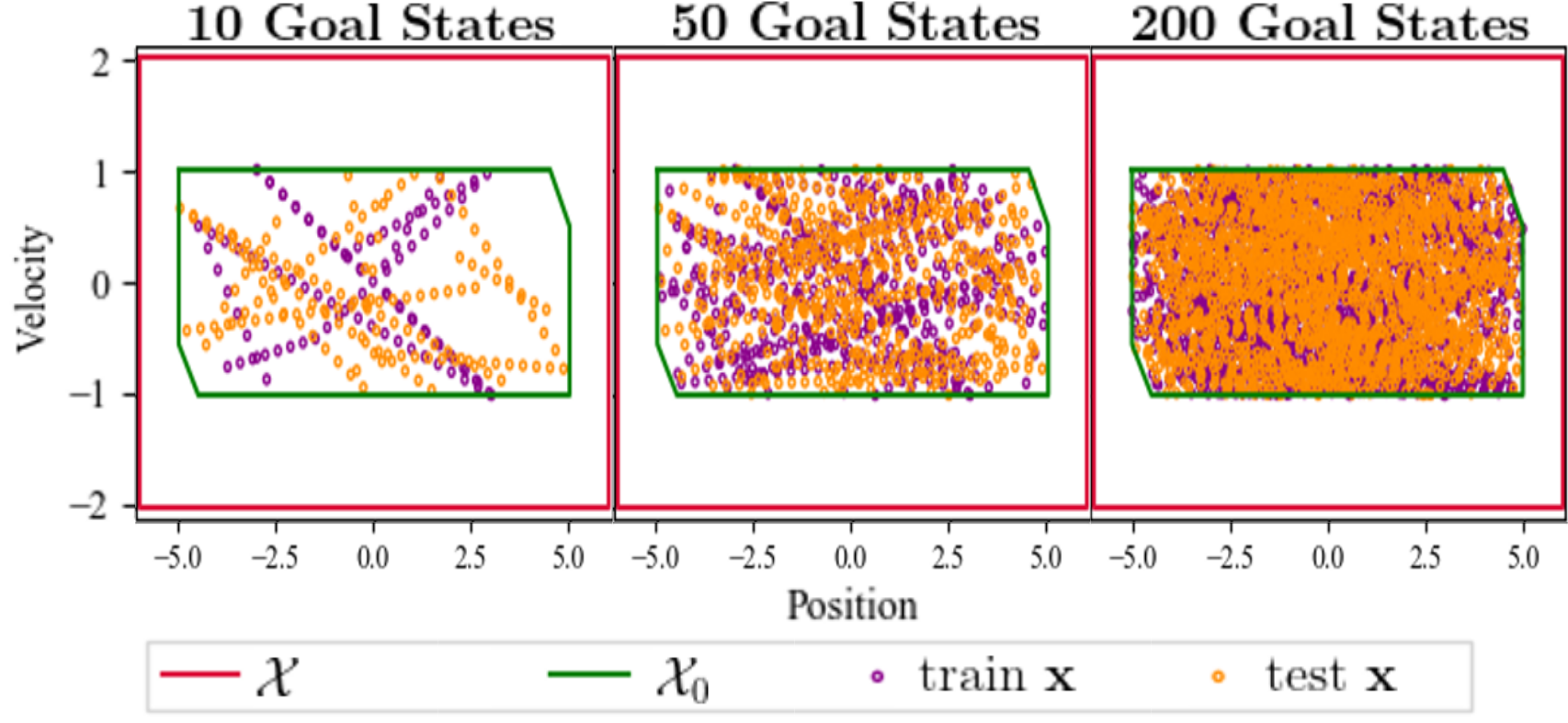}
  \caption{\textbf{Train and Test Distribution}: Illustration of the train and test set generation. Both fill $\mathcal{X}_{0}$ with enough goal states.}
  \label{fig:DatasetGen2}
\end{subfigure}
\caption{Illustration of the data set generation algorithm (Alg~\ref{alg:space_filling_tree}) for a double integrator system defined in Sec.~\ref{sec:system_description}.}
\label{fig:DatasetGen}
\end{figure}

\section{Evaluation}
%\section{Numerical Examples}
\label{sec:results}

%
%\begin{figure*}[t!]
%	\begin{center} 
%		\setlength\extrarowheight{5pt}
%		%\resizebox{\textwidth}{!}
%		{\begin{tabular}{||c | c | c | c | c | c | c||} 
%				\hline
%				\thead{System} & \makecell{\textbf{Train} \\ Dataset Size \\ (Sobol Size)} & \makecell{\textbf{Test Seed} \\ Dataset Size \\ (Sobol Size)} & \makecell{\textbf{Test} \\ Dataset Size \\ (Sobol Size)}& \makecell{\textbf{Dataset} \\ \textbf{Generation} \\ \textbf{Time}} & \makecell{\textbf{Training} \\ Time \\ (Epochs)} & \makecell{\textbf{Testing} \\ \textbf{Time}} \\ [0.6ex] 
%				\hline\hline
%				Double Integrator & \makecell{17,873 \\ (2,000)} & \makecell{3,754 \\ (400)} & \makecell{3,683 \\ (400)} & 2.6s & 120s & 3.98s\\
%				\hline\hline
%				Oscillating Masses & 0 & 0 & 0 & 0 & 0 & 0 \\
%				\hline\hline
%				\makecell{High-Dimensional \\ Oscillating Masses} & 
%				\makecell{$1365k$ \\ ($200k$)} & \makecell{$273k$ \\ ($40k$)} & \makecell{$273k$ \\ ($40k$)} & $3.5$ hrs & \makecell{2.0 hrs \\ (50)} & 24 hrs  \\
%				\hline\hline
%			\end{tabular}}
%		\end{center}
%		\caption{Dataset and Training Statistics:  }
%		\label{fig:training_stats}
%\end{figure*}

We demonstrate the proposed approach on four systems of increasing complexity. The experiments highlight the challenges that arise when scaling to high-dimensional systems. We use MPT3 \cite{MPT3} to compute the terminal cost and constraints in \eqref{eq:time_invariant_finite_horizon_problem}, Tensorflow \cite{abadi2016tensorflow} to train and evaluate the neural network planner $\tilde{\pi}(\vec{x}|\vec{\theta})$, and the primal active-set QP solver SQOPT \cite{sqopt77} to implement the Phase I and Phase II iterations in Alg. \ref{alg:explicit_implicit_planner}. The neural network models were trained using dual Xeon E5-2683 v4 CPUs with $32$ cores for data generation and two NVIDIA Titan X Pascal GPUs for training. The online inference speed evaluation was performed on an i7-7700K CPU with $4$ cores and NVIDIA GeForce GTX 1080 Ti GPU.

% We use a variety of software packages to implement our approach. 
%We provide numerical examples of our approach on four systems. The purpose of these examples is to demonstrate that our proposed approach has the ability to scale to large problems, as well as highlight the challenges that arise when scaling to a high dimensional system. We use Tensorflow~\cite{abadi2016tensorflow} to train and evaluate the neural networks, SQOPT~\cite{sqopt77} as our primal active set solver, and MPT3~\cite{MPT3} to compute the terminal cost and constraints. The network models were trained using dual Xeon E5-2683 v4 CPUs with $32$ cores for the data set generation, and two NVIDIA Titan X Pascal GPUs for the training. The evaluation of online inference speed was performed on an i7-7700K CPU with $4$ cores, and an NVIDIA GeForce GTX 1080 Ti GPU.

% in \eqref{eq:least_squares_Lagrangian}
% Each hidden layer is followed by the ReLU activation function. 
The networks were trained with the Lagrangian loss function $\ell(\vec{\theta})$ using the Adam optimizer \cite{kingma2014adam} for 100 epochs. The network architectures and training parameters are listed in Fig.~\ref{fig:nn_structure}. The network depths were chosen using techniques described in \cite{Chen2018ApproximatingEM} and the widths were chosen by trial and error. Other methods to choose network architectures are described in \cite{lucia2018deep}. Given the large sizes of the train data sets (see Fig.~\ref{fig:data_set_overview}), 100 epochs is sufficient to reach convergence. Depending on the system, we take $2.5-10 \%$ samples from the train set as a held-out validation set. The validation set is only used to evaluate the training loss to avoid over- or under-fitting and is not used to update the neural network weights.

\begingroup
\renewcommand{\arraystretch}{1.1}
\begin{figure}[t!]
    \begin{center} 
\centering
    \begin{tabular}{| c | c | c |}
        \hline
        System & Layer Widths & Num. Parameters \\
        \hline
        Sys. 1 & \textbf{2}, 32, 32, \textbf{30} & 2,142\\
        \hline
        Sys. 2 & \textbf{12}, 32, 32, \textbf{300} & 89,900\\
        \hline 
        Sys. 3  & \makecell{\textbf{12}, 32, 64, 128, \\ 256, \textbf{450}} & 159,522\\
        \hline
        Sys. 4 & \makecell{\textbf{36}, 128, 128, 256, \\ 256, 512, 512, \textbf{2250}} & 1,668,554\\
        \hline
    \end{tabular}
    \end{center}
        \caption{Neural network architectures. Each layer is fully connected and followed by the ReLU nonlinearity. Bolded are the input and output sizes of the network.}
        \vspace{2mm}
      \label{fig:nn_structure}
\end{figure}
\endgroup

\begingroup
\setlength{\tabcolsep}{4pt}
\begin{figure}
\begin{center}
\begin{tabular}{| c | c | c | c | c | c | c | c | c | c |}
\hline
System & $n$ & $m$ & $N$ & $c_x$ & $c_f$ & $c_u$ & $d_p$ & $d_{in}$ & $d_{eq}$ \\
\hline
Sys. 1 & 2 & 1 & 10 & 4 & 6 & 2 & 30 & 66 & 20 \\
\hline 
Sys. 2 & 12 & 3 & 20 & 24 & 246 & 6 & 300 & 846 & 240 \\
\hline 
Sys. 3 & 12 & 3 & 30 & 24 & 104 & 6 & 450 & 1004 & 360 \\
\hline 
Sys. 4 & 36 & 9 & 50 & 72 & 408 & 18 & 2250 & 4908 & 1800 \\
\hline 
\end{tabular}
\end{center}
\caption{Problem specifications: state dimension ($n$), control dimension ($m$), time horizon ($N$), state constraints ($c_x$), terminal state constraints ($c_f$), control constraints ($c_u$), primal variable dimension ($d_p$), dual variable dimension for inequality constraints ($d_{in}$) and equality constraints ($d_{eq}$).}
\label{fig:problem_specifications}
\end{figure}
\endgroup

%The first system is the double integrator system that has been used as an illustrative example, and it has a system size of $n=2$, $m=1$, and $N=10$. The second system is a quadrotor with a system size of $n=12$, $m=3$, and $N=20$. The third system is an oscillating masses system with size of $n=12$, $m=3$, and $N=30$. The fourth system is a high dimensional version of the oscillating masses system and has a size of $n=36$, $m=9$, and $N=50$.

%We first introduce each system in Section~\ref{sec:system_description}. We then present quantitative results and details of the training of the neural network in Section~\ref{sec:training_statistics}. We next evaluate our RHC on a held out test set on the matrics of required number of active set iterations before termination, and the resulting sub-optimality of the RHC in Section~\ref{sec:test_metrics}. We then present some statistics that highlight and illustrate the computational challenges that occur when scaling to large systems in Section~\ref{sec:scaling_metrics}

\subsection{System Descriptions}
\label{sec:system_description}
The dimensions of the four problem settings are summarized in Fig.~\ref{fig:problem_specifications}. We choose the terminal region $\mathcal{X}_{f}$ to be $\mathcal{O}_{\infty}^{LQR}$, and the terminal cost $\vec{x}_{N}^{\top}\mat{P}_{\infty}\vec{x}_{N}$, defined in \eqref{eq:dare}.

\begingroup
\renewcommand{\arraystretch}{1.2}
\begin{figure*}
\begin{tabularx}{\textwidth}{ |c| *{6}{Y|} }
\cline{2-7}
	\multicolumn{1}{c|}{}
 & \multicolumn{3}{c|}{\textit{Data Set Sizes}}
 & \multicolumn{3}{c|}{\textit{Timing Statistics}} \\
\cline{2-7}
   \multicolumn{1}{c|}{} 
 & \multicolumn{1}{c|}{\textbf{Train}}
 & \multicolumn{1}{c|}{\textbf{Buffer}}
 & \multicolumn{1}{c|}{\textbf{Test}}   
 & \multicolumn{1}{c|}{\textbf{Data Gen.}}
 & \multicolumn{1}{c|}{\textbf{Network Train}}
 & \multicolumn{1}{c|}{\textbf{Test}}\\
 \hline
 \cline{1-7}
    \multicolumn{1}{|c|}{Sys.}
 & \multicolumn{3}{c|}{\# \textbf{Total} (\textit{Seed}) \textit{examples in thousands}}
 & \multicolumn{1}{c|}{\textit{total}}
 & \multicolumn{1}{c|}{\textit{total}}
 & \multicolumn{1}{c|}{\textit{per example}} \\
 \hline
 1  & \textbf{\ \ 15} (2) &  \textbf{\ \ \ \ \ 4} (0.4) &  \textbf{\ \ \ \ \ 4} (0.4) & ~1.5 secs & 100 secs & ~~1.0 ms\\
 2  &  \textbf{\ \ \ 863} (200) & \textbf{173} (40) &  \textbf{173} (40) &  ~~3.0 mins & 1.4 hrs & ~~3.1 ms\\
 3  &  \textbf{\ 174} (20) & \textbf{35} (4) & \textbf{35} (4) &  ~~1.0 mins & 0.4 hrs & ~~3.3 ms\\
 4  &  \textbf{1,754} (400) & \textbf{175} (40) & \textbf{175} (40) & 5.2 hrs & ~20 hrs & 38.8 ms\\
\hline
\end{tabularx}
        \caption{Overview of the data set sizes and offline data generation, train, and test time statistics.}
        \label{fig:data_set_overview}
\end{figure*}
\endgroup

\begin{system}[Double Integrator]
\label{sys:double_integrator}
A mass under force input can be modeled as a double integrator system,
\begin{equation*}
\vec{x} \in \mathbb{R}^{2}, \;\;\; \vec{u} \in \mathbb{R}, \;\;\; \mat{A} = \begin{bmatrix}
1 & 1 \\ 0 & 1
\end{bmatrix}, \;\;\; \mat{B} = \begin{bmatrix}
0.5 \\ 0.1
\end{bmatrix}.
\end{equation*}
We consider a problem with cost matrices $\mat{Q}=\mat{I}_{2}$, $\mat{R}=1$, state constraints $\mat{A}_{x} = [\mat{I} \;{-\mat{I}}]^{\top}$, $\vec{b}_{x} = [5 ~ 1 ~ 5 ~ 1]^{\top}$, and input constraints $|u_k|\leq 2$, for $k=1,\ldots,N$. 
\end{system}

\begin{system}[Quadrotor]
\label{sys:quadrotor}
The differential flatness of quadrotor robot dynamics enables trajectory generation in the space of flat outputs (position, yaw and their derivatives) that are dynamically feasible for an underactuated quadrotor \cite{liu2017search}. This leads to a time-invariant continuous-time system $\dot{\vec{x}} = \mat{A}_{c}\vec{x} + \mat{B}_{c}\vec{u}$, where $\mat{A}_c = \mat{L}_4 \otimes \mat{I}_3$ and $\mat{B}_c = \vec{e}_4 \otimes \mat{I}_3$. The $3 \times 3$ sub-matrices correspond to position, velocity, acceleration, and jerk. We discretize the system using Euler discretization with time step $0.1$, and consider constraints $\mat{A}_{x} = [\mat{I}_{12}, -\mat{I}_{12}]^{\top}$, $\vec{b}_{x} = [10\cdot \vec{1}_{3}, 5\cdot \vec{1}_{3}, 3 \cdot \vec{1}_{3}, 1 \cdot \vec{1}_{3}, 10 \cdot \vec{1}_{3}, 5\cdot \vec{1}_{3}, 3 \cdot \vec{1}_{3}, 1 \cdot \vec{1}_{3}]^{\top}$, $\mat{A}_{u} = [\mat{I}_{3}, -\mat{I}_{3}]^{\top}$, $\vec{b}_{u} = [\mat{1}_{3}, \mat{1}_{3}]^{\top}$ and cost matrices $\mat{Q} = \mat{I}_{12}$, $\mat{R} = \mat{I}_3$. 
\end{system}

%this continuous time 
% mellinger2012trajectory,

\begin{system}[Oscillating Masses]
\label{sys:oscillating_masses}
Introduced in \cite{wang2010fast}, the oscillating masses is a linear system that can be scaled to large dimensions by increasing the number of masses and springs in the system. We use $6$ masses with a mass of $1$, and $3$ springs with a spring constant $c = 1$ and damping constant $d=0.1$. Let $a = -2c$ and $b=-2$. The system is defined as $\dot{\vec{x}} = \mat{A}_{c}\vec{x} + \mat{B}_{c}\vec{u}$, where 
\begin{align}
\mat{A}_c &= \begin{bmatrix} \mat{0}_6 & \mat{I}_6 \\ a \mat{I}_6 + c\mat{L}_6+c\mat{L}^\top_6 & b \mat{I}_6 + d\mat{L}_6+d\mat{L}^\top_6 \end{bmatrix} \quad \mat{B}_{c} = \begin{bmatrix} \mat{0} \\ \mat{F} \end{bmatrix} \notag\\
%\mat{F} &= \left[
%\begin{array}{ccc}
%1 & 0 & 0  \\
%-1 & 0 & 0  \\
%0 & 1 & 0  \\
%0 & 0 & 1  \\ 
%0 & -1 & 0  \\
%0 & 0 & 1  \\  
%\end{array} \right],
\mat{F} &= \begin{bmatrix}
\vec{e}_1 & -\vec{e}_1 & \vec{e}_2 & \vec{e}_3 & -\vec{e}_2 & \vec{e}_3 
\end{bmatrix}^\top \in \mathbb{R}^{6 \times 3}. \label{eq:oscillating_A}
%\quad \mat{B}_{c} =
%\left[
%\begin{array}{c}
%\mat{0}  \\ \hdashline
%\mat{F} \\ 
%\end{array} \right]. \notag
\end{align}
We discretize this system using first-order hold and time step of $0.5$. The cost terms are $\mat{Q}=\mat{I}_{12}$, $\mat{R}=\mat{I}_{3}$, subject to state constraints $\mat{A}_{x} = [\mat{I}, -\mat{I}]^{\top}$, $\vec{b}_{x} = 4 \cdot \vec{1}$, and input constraints $\mat{A}_{u} = [\mat{I}, -\mat{I}]^{\top}$, $\vec{b}_{u} = 0.5  \cdot \vec{1}$. 
\end{system}

\begin{system}[High Dimensional Oscillating Masses]
\label{sys:high_dim_oscillating_masses}
The high dimensional oscillating masses system is a scaled-up version of Sys.~\ref{sys:oscillating_masses}. The matrix $\mat{A}_{c}$ is the extension of \eqref{eq:oscillating_A} to 36 dimensions obtained by replacing the $6$-dimensional submatrices with $18$-dimensional counterparts. The input matrix is $\mat{B}_{c} = [ \mat{0} ~|~ \mat{I}_{3} \otimes \mat{F}^{\top} ] ^{\top}$. The cost and constraint terms are chosen similarly as for Sys.~\ref{sys:oscillating_masses}. 
%Note that the maximal positively invariant set of the LQR controller, $\mathcal{O}_{\infty}^{LQR}$, is simple to calculate even for large-dimensional systems using off-the-shelf toolboxes~\cite{MPT3}. 
\end{system}

\begingroup
\renewcommand{\arraystretch}{1.2}
\begin{figure*}
\begin{tabularx}{\textwidth}{ |c| *{6}{Y|} }
\cline{2-7}
   \multicolumn{1}{c|}{} 
 & \multicolumn{3}{c|}{Neural Network Warm Start}  
 & \multicolumn{3}{c|}{Cold Start}\\
\cline{2-7}
  \multicolumn{1}{c|}{}  & \textit{p.f.} & \textcolor{magenta}{\textit{p.f. + sub.}} & \textit{optimal} & \textit{p.f.} & \textit{p.f. + sub.} & \textcolor{olive}{\textit{optimal}}\\
\hline
 \multicolumn{1}{|c|}{Sys.} 
 &\multicolumn{6}{c|}{\makecell{\textbf{Absolute Iterations} Average (\textit{Worst})}}\\
 \hline
 1  & 1.0 (5) &  \textcolor{magenta}{7.1} (\textcolor{magenta}{15}) &  12.3 (21) & 1.1 (2) & 9.5 (12) & \textcolor{olive}{11.4} (\textcolor{olive}{22})\\
 2  &  8.2 (44) & \textcolor{magenta}{17.9} (\textcolor{magenta}{70}) &  57.9 (110) &  32.5 (97) & 62.1 (124) & \textcolor{olive}{98.9} (\textcolor{olive}{144})\\
 3  &  5.3 (123) & \textcolor{magenta}{7.2} (\textcolor{magenta}{133}) & 83.6 (145) &  224.6 (408) & 255.1 (448) & \textcolor{olive}{327.1} (\textcolor{olive}{498}) \\
 4  &  17.4 (291) & \textcolor{magenta}{27.8} (\textcolor{magenta}{391}) & 400.4 (1115) & 1074.4 (2089) & 1262.1 (2409) & \textcolor{olive}{1611.7} (\textcolor{olive}{2559}) \\
\hline
 \multicolumn{1}{|c|}{} 
 &\multicolumn{6}{c|}{\makecell{\textbf{Open Loop Sub. (\%) $\sigma_{ol}$} Average (\textit{Worst})} }\\
 \hline
 1  & \small{1277.6 (41277.2)}  & \textcolor{magenta}{9.8} (\textcolor{magenta}{266.3}) & --- & 201.3 (583.4) & 13.0 (277.8) & ---\\
 2  &  1.2 (29.3)  & \textcolor{magenta}{0.3} (\textcolor{magenta}{7.0}) &  --- &  19.7 (161.8) & 0.8 (11.1) & ---\\
 3  &  3.1 (16.9) & \textcolor{magenta}{2.6} (\textcolor{magenta}{10.1}) & --- &  98.3 (423.4) & 6.4 (33.0) & ---\\
 4  &  4.9 (26.4) & \textcolor{magenta}{3.5} (\textcolor{magenta}{13.8}) & --- & 192.9 (686.6) & 6.3 (22.5) & ---\\
\hline
\end{tabularx}
        \caption{Open-loop metrics: average and worst case number of iterations and suboptimality. Our method is displayed in \textcolor{magenta}{magenta}, and the baseline method is displayed in \textcolor{olive}{olive}. The evaluations are performed on independent states in the test set, and the baselines methods can only be initialized from a Cold start. On Sys. 4, \textcolor{magenta}{our method} achieves a $58\times$ reduction in required iterations compared to the \textcolor{olive}{baseline}. Our methods achieve better suboptimality metric $\sigma_{ol}$ compared to Cold starts due to the higher-quality NN initialization.}
        \label{fig:test_stats}
\end{figure*}
\endgroup

\subsection{Data Generation}

As discussed in Sec.~\ref{sec:space_filling}, $\mathcal{X}_{0}$ cannot be sampled directly. Without an efficient data set generation strategy, obtaining large data sets is practically impossible. For example, a naive rejection-sampling approach yields feasible samples $98.6\%$ of the time for Sys.~\ref{sys:double_integrator}, $16.7\%$ for Sys.~\ref{sys:quadrotor}, $1.1\%$ for Sys.~\ref{sys:oscillating_masses}, and $0.4\%$ for Sys.~\ref{sys:high_dim_oscillating_masses}. Generating a data set of $1$ million feasible states for Sys.~\ref{sys:high_dim_oscillating_masses} using rejection sampling will take $2.5$ years. In contrast, the geometric random walk approach in Alg.~\ref{alg:space_filling_tree} generated a data set of $1.7$ million feasible states in $5.2$ hours. Fig.~\ref{fig:data_set_overview} shows the data set sizes used for the different systems and the computation time for data generation using Alg.~\ref{alg:space_filling_tree}.

\subsection{Open-loop Metrics}
\label{sec:test_metrics}

\begingroup
\renewcommand{\arraystretch}{1.2}
\begin{figure*}
\begin{tabularx}{\textwidth}{ |c| *{6}{Y|} }
\cline{1-7}
   \multicolumn{1}{|c|}{Sys. 4} 
 & \multicolumn{4}{c|}{\textbf{Online Inference Time (ms)}}
  & \multicolumn{2}{c|}{\textbf{Trajectory}}  \\
 \cline{1-5}
   \multicolumn{1}{|c|}{\textbf{ Method}} 
 & \multicolumn{1}{c|}{\textbf{Network}}
 & \multicolumn{1}{c|}{Initial \textbf{$\vec{x}(0)$}} 
 & \multicolumn{1}{c|}{Remaining \textbf{$\vec{x}(t)$}}
 & \multicolumn{1}{c|}{\textbf{Total}}
 & \multicolumn{1}{c|}{\textbf{Sub.~(\textit{\%}) $\sigma_{cl}$}} \\
 
 \cline{1-7}
\multicolumn{7}{|c|}{Average (\textit{Worst})} \\
 
\cline{1-7}
    \textcolor{magenta}{NN \textit{p.f. + sub.}} & \textcolor{magenta}{~~11 (\textit{65})} & ~~\textcolor{RedOrange}{29} (\textcolor{RedOrange}{\textit{168}}) & \textcolor{RedOrange}{~~15} (\textcolor{RedOrange}{\textit{92}}) & \textcolor{magenta}{~~664 (\textit{1678})} & \textcolor{magenta}{8.8 (\textit{15.5})} \\
    NN ($\eta < 0.1$) & 12 (\textit{63}) & \textcolor{RedOrange}{114} (\textcolor{RedOrange}{\textit{168}}) & \textcolor{RedOrange}{88} (\textcolor{RedOrange}{\textit{216}}) & 2185 (\textit{5380}) & 0.0 (\textit{0.0}) \\
    NN \textit{optimal} & 12 (\textit{63}) & \textcolor{RedOrange}{200} (\textcolor{RedOrange}{\textit{221}}) & \textcolor{RedOrange}{193} (\textcolor{RedOrange}{\textit{259}}) & 4420 (\textit{8355}) & --- \\
 \cline{1-7}
    Hot \textit{p.f. + sub.} & --- & \textcolor{blue}{447} (\textcolor{blue}{\textit{576}}) & ~~\textcolor{red}{41} (\textcolor{red}{\textit{186}}) & 1311 (\textit{2988}) & 0.6 (\textit{1.5}) \\
    Hot ($\eta < 0.1$) & --- & \textcolor{blue}{543} (\textcolor{blue}{\textit{644}}) & \textcolor{red}{69} (\textcolor{red}{\textit{288}}) & 1962 (\textit{3873}) & 0.0 (\textit{0.0}) \\     
    Hot \textit{optimal} & --- & \textcolor{blue}{593} (\textcolor{blue}{\textit{676}}) & \textcolor{red}{89} (\textcolor{red}{\textit{474}}) & 2409 (\textit{4259}) & --- \\     
 \cline{1-7}
    CVXPY (\textit{ECOS}) & --- & ~~\textcolor{blue}{276} (\textcolor{blue}{\textit{1541}}) & \textcolor{blue}{216} (\textcolor{blue}{\textit{386}}) & 5114 (\textit{10141}) & --- \\    
    CVXPY (\textit{Gurobi}) & --- & \textcolor{blue}{147} (\textcolor{blue}{\textit{390}}) & \textcolor{blue}{151} (\textcolor{blue}{\textit{192}}) & 3525 (\textit{6190}) & --- \\   
    CVXPY (\textit{Mosek}) & --- & ~~\textcolor{blue}{182} (\textcolor{blue}{\textit{1437}}) & \textcolor{blue}{161} (\textcolor{blue}{\textit{225}}) & 3789 (\textit{7032}) & --- \\    
 \cline{1-7}
\end{tabularx}
\caption{%
Closed-loop metrics (Sys. 4): Average and worst case time required to implement the \textit{RHC} for \textcolor{RedOrange}{Warm starts}, \textcolor{Red}{Hot starts}, and \textcolor{Blue}{Cold starts}. The closed-loop trajectory suboptimality metric $\sigma_{cl}$ (defined after \eqref{eq:closed-loop-cost}) is different from the open-loop suboptimality metric $\sigma_{ol}$. The former is a cumulative measure computed along the entire closed-loop trajectory, while the latter is an instantanous measure at a given state $\vec{x}$. The CVXPY benchmarks are state-of-the-art QP solvers that provide a baseline for the online inference speed of the active set methods. The results show that relative to Hot starts, the NN Warm starts quickly reach the conditions specified in Thm.~\ref{thm:approximate_RHC_stability} (represented by \textit{p.f. + sub.}).
%approximate thresholds (represented by \textit{p.f. + sub.}). 
The NN Warm starts then switch from faster to slower relative to Hot starts captured at the fixed duality gap threshold $\sigma(\vec{z}|\vec{x}) \leq \eta(\vec{z},\vec{\nu},\vec{\lambda}|\vec{x}) < 0.1$. The NN methods proceed slowly to the precise thresholds needed for optimality. Practically, these latter iterations are unnecessary, because when NN and Hot are terminated with $\eta < 0.1$, the resulting primal variables are essentially optimal, and the closed-loop trajectory matches the optimal one with $\sigma_{cl} = 0.0$. While NN \textit{p.f. + sub.} does have a higher $\sigma_{cl}$ compared to Hot \textit{p.f. + sub.}, this difference indicates the effectiveness of pairing early termination criteria with NN warm starts, as Hot starts need to bear additional computation time to reach near optimal solutions before obtaining the required termination certificates.%
}
% Trajectory suboptimality is different than open loop suboptimality defined in Eqn.~\eqref{eq:suboptimality_level}, as the trajectory suboptimality is computed across the entire trajectory, whereas Eqn.~\eqref{eq:suboptimality_level} is an instantaneous planning cost computed at each encountered state $\vec{x}(t)$. 
%\caption{(Sys. 4) Average time required to compute the \textit{RHC} for our method compared to benchmark methods. We separate the time statistics into neural network inference time, the solution time for the initial state $\vec{x}(0)$, the solution times for the remainder states, and the total solution time across the trajectory. We also report the suboptimality performance as a $\%$ of the optimal cost.}
    \label{fig:avg_time}
\end{figure*}
\endgroup

% The baselines include state-of-the-art interior-point QP solvers, used to solve \eqref{} online as well as an active set method initialized using a 

On the test set, we compare our method (magenta) in Fig.~\ref{fig:test_stats} to various baseline methods. The baselines are the $6$ combinations between $2$ initialization methods -- NN Warm start and Cold start -- and $3$ termination criteria -- terminating after primal feasibility (\emph{p.f.}), primal feasibility and suboptimality (\emph{p.f. + sub.}), and optimality. \emph{Sub.} is achieved at the time when the duality gap $\eta(\vec{z},\vec{\nu},\vec{\lambda}|\vec{x})$ defined in \eqref{eq:duality_bound_suboptimality} is less than $\vec{x}^{\top}\mat{Q}\vec{x}$ according to Thm. \ref{thm:approximate_RHC_stability}. The main benchmark is Cold start and solving to optimality (olive). Fig.~\ref{fig:test_stats} displays the results on the four systems according to various initialization and termination criteria. We evaluate speedup by reporting the number of required iterations and evaluate performance by reporting the suboptimality levels. For Sys. 4, compared to the benchmark Cold \textit{optimal} control law, our method (NN \textit{p.f. + sub.}) reduces required iterations by $58\times$. The results demonstrate that early termination is especially synergistic when paired with NN Warm starts, as it reduces the required iterations by $14.4\times$ versus only $1.3\times$ when paired with Cold starts.

%In addition, our results demonstrate that our ability to terminate the online solver early also drastically cuts down on the required number of iterations. Our method requires on average $400.4$ iterations and the Cold start method requires $1611.7$ iterations to reach optimality, demonstrating the speed up benefits of terminating the solver once the primal feasibility and suboptimality criteria have been reached.

Our method additionally performs well in terms of the suboptimality level. We define \emph{open loop sub. (\%)} as $\sigma_{ol} := \frac{J(\vec{z}|\vec{x}) - J(\vec{z}^{*}|\vec{x})}{J(\vec{z}^{*}|\vec{x})}$. On Sys.~\ref{sys:high_dim_oscillating_masses}, it has an average $\sigma_{ol}$ of $4.9\%$ when only the \textit{p.f.} criterion is met, and $3.5\%$ when both \textit{p.f. + sub.} criteria are met. This is in contrast to the Cold start method, where $\sigma_{ol}$ is high ($192.9\%$) when only the \textit{p.f.} criterion is met, and obtaining both \textit{p.f. + sub.} criteria is necessary to obtain a reasonable suboptimality ($6.3\%$). This discrepancy demonstrates that the neural network $\tilde{\pi}$ alone is obtaining a good guess of the optimal solution, and the online iterations serve mostly to satisfy the \textit{p.f.} criterion. For the low dimensional Sys. 1, $\sigma_{ol}$ of both the NN Warm start and Cold start methods for the \textit{p.f.} criteria is high because the optimal cost is very small ($\sim0$). 

%In addition to measuring the number of iterations, Fig.~\ref{fig:test_stats} also measures 
%the quality of each method according to the suboptimality level compared to 
%the baseline method.
%For the high dimensional systems, when initializing with the neural network, 
%the suboptimality level is low even when only the primal feasibility condition is satisfied. 
%This is in contrast to the Cold start methods, where only achieving primal feasibility 
%results in a high suboptimality level, and the suboptimality criteria must be met 
%in order to obtain good performance. 
%This discrepancy demonstrates that the neural network alone is obtaining a good guess 
%in terms of the objective function, and the iterations serve mostly as slight modifications 
%to obtain satisfaction of the constraints and suboptimality criteria. 

\subsection{Closed-loop Metrics}
\label{sec:high_dim_oscillating_masses_results}

We compare the closed-loop performance of our method to various benchmarks on Sys.~\ref{sys:high_dim_oscillating_masses} in Fig.~\ref{fig:avg_time}. While a Cold start is unavoidable at the initial state $\vec{x}(0)$, we benchmark against Hot start initialization techniques \cite{ferreau2014qpoases} that exploit solutions of previously solved states for the subsequent states $\vec{x}(t)$. The first benchmark is the SQOPT solver \cite{sqopt77} using Hot starts. The other benchmarks include state-of-the-art interior-point QP solvers, ECOS, Gurobi, and Mosek, from CVXPY \cite{diamond2016cvxpy}. For the SQOPT methods, we evaluate $3$ termination criteria: ours (\textit{p.f. + sub.}), a fixed duality gap threshold $\eta(\vec{z},\vec{\nu},\vec{\lambda}|\vec{x}) < 0.1$, and optimal. The CVXPY methods cannot be terminated early, and warm/hot starts are less effective.

% Starting at these initial states $\vec{x}(0)$, a
We evaluate on trajectories generated from $128$ initial states $\vec{x}(0)$ randomly sampled from the test set. At each subsequent state $\vec{x}(t)$, we apply the first control $\vec{u}_0$ obtained from each method. We repeatedly execute this closed-loop controller until the system reaches a state $\vec{x}(t) \in \mathcal{X}_{f}$, at which point each method switches to an optimal LQR controller \cite[Ch. 8]{borrelli2017predictive}. Inference speed is evaluated on states $\vec{x}(t) \notin \mathcal{X}_{f}$, and these states will vary across each method. Optimality performance is evaluated on the entire infinite-horizon trajectory:
\begin{equation} \label{eq:closed-loop-cost}
J_{cl} = \sum_{t=0}^{\infty}\vec{x}^{\top}(t)\mat{Q}\vec{x}(t) + \vec{u}^{\top}(t)\mat{R}\vec{u}(t),
\end{equation}
where the \emph{trajectory suboptimality} is $\sigma_{cl} = \frac{J_{cl} - J^*_{cl}}{J^*_{cl}}$.

For methods that terminate early, rather than performing the suboptimality check at every iteration, it is more efficient to perform the check periodically. In our implementation with SQOPT, API restrictions require the suboptimality checks to be performed externally to the solver, and as a result, we periodically terminate the solver to check the intermediate iterates. We utilize the SQOPT API optimality tolerance parameter, which evaluates the size of the reduced gradients \cite[pg. 39]{sqopt77} to prematurely terminate the solver and then perform the suboptimality check with the procedure described in Sec.~\ref{sec:certificates_feasibility_suboptimality} at each resulting iterate. If we are unable to obtain a \textit{sub.} certificate, we reduce the optimality tolerance parameter by $2$ and repeat from the current SQOPT solver state. For consistency, we apply the same termination strategy across all evaluated methods.

The closed-loop experiments with Hot starts (Fig.~\ref{fig:avg_time}) exhibit similar patterns to the open-loop experiments with Cold starts (Fig.~\ref{fig:test_stats}). On average, across the entire trajectories, our method achieves a $2\times$ speedup compared to Hot \textit{p.f. + sub.}, and a $3.6\times$ speedup compared to Hot \textit{optimal}. Similar to the open loop experiments, the closed loop experiments also demonstrate that early termination is especially synergistic with NN Warm starts compared to Hot starts. Versus optimal, early termination reduces inference time by $12.9\times$ for NN compared to $2.2\times$ for Hot. Further analysis suggests that early termination works well with NN because NN initialized methods reach approximate thresholds quickly, and then slowly reach the precise thresholds required for optimality. While, NN \textit{p.f. + sub.} is $2\times$ faster than Hot \textit{p.f. + sub.}, it becomes $1.8\times$ slower when solved to optimality. To demonstrate this switch, Fig.~\ref{fig:avg_time} evaluates each initialization method with a fixed duality gap threshold $\eta < 0.1$ which shows that the NN Warm start inference time is only $1.1\times$ slower than Hot. Importantly, when terminating with $\eta < 0.1$, the trajectories have effectively $0$ suboptimality, demonstrating that improving from an approximate threshold to a precise optimality threshold has little benefit. 

The $\sigma_{cl}$ metrics in Fig.~\ref{fig:avg_time} further highlight synergies between NN Warm starts and early termination. NN \textit{p.f. + sub.} has $\sigma_{cl}$ of $8.8\%$ compared to $0.6\%$ for Hot \textit{p.f. + sub.} While this may initially seem like a drawback for the NN methods, it actually demonstrates that early termination is not as effective in reducing online inference time for Hot starts, as satisfying the termination criteria requires near optimal solutions. The motivation of these methods is fast online inference speed with guarantees of feasibility and stability, not in achieving optimality. 

\section{Conclusion}

%We presented a hybrid explicit-implicit MPC procedure that combines an offline trained neural network with an online primal active set solver. The neural network component allows our approach to scale to high dimensional problems, while the primal active set solver provides corrective steps to meet the conditions necessary to provide guarantees on recursive feasibilty and asymptotic stability. Our numerical results demonstrate that our approach scales to large problems including one with $36$ states, $9$ inputs, and time horizon of $50$.

We presented a hybrid explicit-implicit MPC procedure that combines an offline trained neural network with an online primal active set solver. We proposed a primal-dual loss function based on the Lagrangian to train the neural network. Using the primal variable predictions, we derived an algorithm to provide certificates of primal feasibility and suboptimality, the criteria necessary to guarantee recursive feasibility and asymptotic stability. Finally, we demonstrated how warm start and early termination can combine the primal active set solver with the neural network to accelerate inference times. The key challenge of function approximation in high dimensional mp-QPs is choosing good training points in a convex set defined by a membership oracle. We introduced a geometric random walk algorithm that can efficiently generate a data set for large problems. Our results indicate the importance of addressing this data generation problem to obtain a scalable solution. The combination of these ideas yields a RHC with guarantees on recursive feasibility and asymptotic stability, while achieving a $2\times$ speedup versus the best benchmark method  on a system with thousands of optimization variables.

%, as a naive method would take $23.8$ years to generate a large data set, compared to our method which only takes $3.5$ hours on the most challenging system.

%There are a variety of directions for future research. One direction is explicitly minimizing the number of iterations required by the active set method during the neural network training process. Another direction is combining the ideas of \textit{geometric random walks} and \textit{QMC} more formally than the algorithm presented in this work. A third direction is using ideas in \textit{Analytical Learning Theory} and \textit{QMC} to create good data sets that result in useful bounds on the generalization of the neural network. The final direction is applying the presented ideas to more complicated problems and systems, such as nonlinear MPC or hybrid mixed-integer MPC.

\begin{ack}                               % Place acknowledgements
This work is supported in part by ARO grant W911NF-13-1-0350, DARPA grant HR001151626/HR0011516850, NSF CRII RI IIS-1755568, the Semiconductor Research Corporation (SRC), and the NVIDIA AI Labs program. We are grateful to Elizabeth Wong for answering our many questions about SQOPT and providing guidance on initialization and accurate timing.% here.
\end{ack}

\bibliographystyle{abbrv}        % Include this if you use bibtex 
\bibliography{bib/autosam}       % and a bib file to produce the 

\begin{thebibliography}{10}

\bibitem{abadi2016tensorflow}
M.~Abadi, P.~Barham, J.~Chen, Z.~Chen, A.~Davis, J.~Dean, M.~Devin,
  S.~Ghemawat, G.~Irving, M.~Isard, et~al.
\newblock {Tensorflow: A system for large-scale machine learning}.
\newblock In {\em Symposium on Operating Systems Design and Implementation
  (OSDI)}, volume~16, pages 265--283, 2016.

\bibitem{arora2018understanding}
R.~Arora, A.~Basu, P.~Mianjy, and A.~Mukherjee.
\newblock Understanding deep neural networks with rectified linear units.
\newblock In {\em International Conference on Learning Representations}, 2018.

\bibitem{borrelli2017predictive}
F.~Borrelli, A.~Bemporad, and M.~Morari.
\newblock {\em Predictive control for linear and hybrid systems}.
\newblock Cambridge University Press, 2017.

\bibitem{bouffard2012learning}
P.~Bouffard, A.~Aswani, and C.~Tomlin.
\newblock Learning-based model predictive control on a quadrotor: Onboard
  implementation and experimental results.
\newblock In {\em IEEE Int. Conf. on Robotics and Automation (ICRA)}, pages
  279--284, 2012.

\bibitem{boyd2004convex}
S.~Boyd and L.~Vandenberghe.
\newblock {\em Convex optimization}.
\newblock Cambridge University Press, 2004.

\bibitem{Chen2018ApproximatingEM}
S.~Chen, K.~Saulnier, N.~Atanasov, D.~D. Lee, V.~Kumar, G.~J. Pappas, and
  M.~Morari.
\newblock Approximating explicit model predictive control using constrained
  neural networks.
\newblock In {\em American Control Conference}, pages 1520--1527, 2018.

\bibitem{diamond2016cvxpy}
S.~Diamond and S.~Boyd.
\newblock {CVXPY}: {A} {P}ython-embedded modeling language for convex
  optimization.
\newblock {\em Journal of Machine Learning Research}, 17(83):1--5, 2016.

\bibitem{erez2013integrated}
T.~Erez, K.~Lowrey, Y.~Tassa, V.~Kumar, S.~Kolev, and E.~Todorov.
\newblock An integrated system for real-time model predictive control of
  humanoid robots.
\newblock In {\em IEEE Int. Conf. on Humanoid Robots}, pages 292--299, 2013.

\bibitem{Ferreau2008}
H.~Ferreau, H.~Bock, and M.~Diehl.
\newblock {An online active set strategy to overcome the limitations of
  explicit MPC}.
\newblock {\em International Journal of Robust and Nonlinear Control},
  18(8):816--830, 2008.

\bibitem{ferreau2014qpoases}
H.~J. Ferreau, C.~Kirches, A.~Potschka, H.~G. Bock, and M.~Diehl.
\newblock {qpOASES: A parametric active-set algorithm for quadratic
  programming}.
\newblock {\em Mathematical Programming Computation}, 6(4):327--363, 2014.

\bibitem{sqopt77}
P.~E. Gill, W.~Murray, M.~A. Saunders, and E.~Wong.
\newblock User's guide for {SQOPT 7.7}: Software for large-scale linear and
  quadratic programming.
\newblock Technical report, Department of Mathematics, UCSD, 2018.

\bibitem{Goodfellow-et-al-2016}
I.~Goodfellow, Y.~Bengio, and A.~Courville.
\newblock {\em Deep Learning}.
\newblock MIT Press, 2016.

\bibitem{MPT3}
M.~Herceg, M.~Kvasnica, C.~Jones, and M.~Morari.
\newblock {Multi-Parametric Toolbox 3.0}.
\newblock In {\em European Control Conference}, pages 502--510, 2013.

\bibitem{hertneck2018learning}
M.~Hertneck, J.~K{\"o}hler, S.~Trimpe, and F.~Allg{\"o}wer.
\newblock Learning an approximate model predictive controller with guarantees.
\newblock {\em IEEE Control Systems Letters}, 2(3):543--548, 2018.

\bibitem{john2008implementation}
E.~John and E.~A. Y{\i}ld{\i}r{\i}m.
\newblock Implementation of warm-start strategies in interior-point methods for
  linear programming in fixed dimension.
\newblock {\em Computational Optimization and Applications}, 41(2):151--183,
  2008.

\bibitem{jones2010polytopic}
C.~N. Jones and M.~Morari.
\newblock Polytopic approximation of explicit model predictive controllers.
\newblock {\em IEEE Trans. on Auto. Control}, 55(11):2542--2553, 2010.

\bibitem{kingma2014adam}
D.~P. Kingma and J.~Ba.
\newblock Adam: {A} method for stochastic optimization.
\newblock In {\em International Conference on Learning Representations}, 2015.

\bibitem{klauvco2019machine}
M.~Klau{\v{c}}o, M.~Kal{\'u}z, and M.~Kvasnica.
\newblock Machine learning-based warm starting of active set methods in
  embedded model predictive control.
\newblock {\em Engineering Applications of Artificial Intelligence}, 77:1--8,
  2019.

\bibitem{kvasnica2012clipping}
M.~Kvasnica and M.~Fikar.
\newblock {Clipping-based complexity reduction in explicit MPC}.
\newblock {\em IEEE Trans. on Auto. Control}, 57(7):1878--1883, 2012.

\bibitem{liu2017search}
S.~Liu, N.~Atanasov, K.~Mohta, and V.~Kumar.
\newblock Search-based motion planning for quadrotors using linear quadratic
  minimum time control.
\newblock In {\em IEEE/RSJ Int. Conf. on Intelligent Robots and Systems}, pages
  2872--2879, 2017.

\bibitem{lucia2018deep}
S.~Lucia and B.~Karg.
\newblock A deep learning-based approach to robust nonlinear model predictive
  control.
\newblock {\em IFAC-PapersOnLine}, 51(20):511--516, 2018.

\bibitem{qin2003survey}
S.~J. Qin and T.~A. Badgwell.
\newblock A survey of industrial model predictive control technology.
\newblock {\em Control engineering practice}, 11(7):733--764, 2003.

\bibitem{richter2018bayesian}
C.~Richter, W.~Vega-Brown, and N.~Roy.
\newblock Bayesian learning for safe high-speed navigation in unknown
  environments.
\newblock In {\em Robotics Research}, volume~2, pages 325--341. Springer, 2018.

\bibitem{ross2011reduction}
S.~Ross, G.~Gordon, and D.~Bagnell.
\newblock A reduction of imitation learning and structured prediction to
  no-regret online learning.
\newblock In {\em Int. Conf. on Artificial Intelligence and Statistics}, pages
  627--635, 2011.

\bibitem{sobol1967distribution}
I.~M. Sobol'.
\newblock On the distribution of points in a cube and the approximate
  evaluation of integrals.
\newblock {\em USSR Computational Mathematics and Mathematical Physics}, 7(4),
  1967.

\bibitem{summers2011multiresolution}
S.~Summers, C.~N. Jones, J.~Lygeros, and M.~Morari.
\newblock A multiresolution approximation method for fast explicit model
  predictive control.
\newblock {\em IEEE Transactions on Automatic Control}, 56(11):2530--2541,
  2011.

\bibitem{vempala2005geometric}
S.~Vempala.
\newblock Geometric random walks: a survey.
\newblock {\em Combinatorial \& computational geometry}, 52:577--616, 2005.

\bibitem{vichik2014solving}
S.~Vichik and F.~Borrelli.
\newblock Solving linear and quadratic programs with an analog circuit.
\newblock {\em Computers \& Chemical Engineering}, 70:160--171, 2014.

\bibitem{wang2010fast}
Y.~Wang and S.~Boyd.
\newblock Fast model predictive control using online optimization.
\newblock {\em IEEE Trans. Control Syst. Technol.}, 18(2):267--278, 2010.

\bibitem{Watterson2015SafeRH}
M.~Watterson and V.~Kumar.
\newblock Safe receding horizon control for aggressive mav flight with limited
  range sensing.
\newblock {\em IEEE/RSJ Int. Conf. on Intelligent Robots and Systems}, pages
  3235--3240, 2015.

\bibitem{wright1999numerical}
S.~Wright and J.~Nocedal.
\newblock Numerical optimization.
\newblock {\em Springer Science}, 35(67-68):7, 1999.

\bibitem{zeilinger2011real}
M.~N. Zeilinger, C.~N. Jones, and M.~Morari.
\newblock Real-time suboptimal model predictive control using a combination of
  explicit mpc and online optimization.
\newblock {\em IEEE Transactions on Automatic Control}, 56(7):1524--1534, 2011.

\bibitem{zhang2019safe}
X.~Zhang, M.~Bujarbaruah, and F.~Borrelli.
\newblock Safe and near-optimal policy learning for model predictive control
  using primal-dual neural networks.
\newblock {\em American Control Conference}, pages 354--359, 2019.

\end{thebibliography}
                                 % bibliography (preferred). The
                                 % correct style is generated by
                                 % Elsevier at the time of printing.

%\begin{thebibliography}{99}     % Otherwise use the  
                                 % thebibliography environment.
                                 % Insert the full references here.
                                 % See a recent issue of Automatica 
                                 % for the style.
%  \bibitem[Heritage, 1992]{Heritage:92}
%     (1992) {\it The American Heritage. 
%     Dictionary of the American Language.}
%     Houghton Mifflin Company.
%  \bibitem[Able, 1956]{Abl:56}
%     B.~C.~Able (1956). Nucleic acid content of macroscope. 
%     {\it Nature 2}, 7--9. 
%  \bibitem[Able {\em et al.}, 1954]{AbTaRu:54}   
%     B.~C. Able, R.~A. Tagg, and M.~Rush (1954).
%     Enzyme-catalyzed cellular transanimations.
%     In A.~F.~Round, editor, 
%     {\it Advances in Enzymology Vol. 2} (125--247). 
%     New York, Academic Press.
%  \bibitem[R.~Keohane, 1958]{Keo:58}
%     R.~Keohane (1958).
%     {\it Power and Interdependence: 
%     World Politics in Transition.}
%     Boston, Little, Brown \& Co.
%  \bibitem[Powers, 1985]{Pow:85}
%     T.~Powers (1985).
%     Is there a way out?
%     {\it Harpers, June 1985}, 35--47.

%\end{thebibliography}

%\appendix 
%\section{Appendix}
                                        % in the appendices.
\end{document}